\documentclass[letterpaper, 10 pt, conference]{packages/ieeeconf}
\IEEEoverridecommandlockouts
\overrideIEEEmargins 

\usepackage[margin=0.75in]{geometry}
\usepackage{packages/algorithm}
\usepackage{packages/algorithmic}

\usepackage[compress]{cite}
\makeatletter
\let\NAT@parse\undefined
\makeatother
\usepackage[colorlinks=true]{hyperref}

\usepackage[table]{xcolor} 
\usepackage[section]{placeins} 

\usepackage{graphics} 
\usepackage{amsmath} 

\usepackage{mathtools} 
\usepackage{todonotes}
\usepackage{amssymb}  
\usepackage{makeidx} 
\usepackage{graphicx} 
\graphicspath{{figures/}}
\usepackage{multicol} 
\usepackage[bottom]{footmisc} 
\usepackage[utf8]{inputenc} 
\usepackage{multirow}
\usepackage{booktabs} 
\usepackage{adjustbox}
\usepackage{caption}
\usepackage{makecell} 
\usepackage{subcaption}
\captionsetup{compatibility=false} 
\usepackage{tikz}
\usetikzlibrary{positioning}
\usetikzlibrary{shapes,snakes}
\usepackage{bm} 
\usepackage{float} 
\usepackage{color} 
\usepackage{empheq} 
\usepackage{threeparttable} 
\usepackage{xcolor}
\usepackage{siunitx}
\usepackage{xurl} 

\usepackage{paralist}
\usepackage{acronym}  

\acrodef{DOF}{degrees of freedom}
\acrodef{SNR}{signal-to-noise ratio}
\acrodef{RMSE}{root mean squared error}
\acrodef{GT}{ground truth}



\newcommand\scalemath[2]{\scalebox{#1}{\mbox{\ensuremath{\displaystyle #2}}}}


\usepackage{mathtools}
\DeclarePairedDelimiterX{\norm}[1]{\lVert}{\rVert}{#1}

\newcommand{\diag}{\text{diag}}
\newcommand{\skewm}[1]{\ensuremath{\lfloor #1\, \times \rfloor}}
\newcommand{\rotm}{\ensuremath{\mathbf{C}}}
\newcommand{\rotmh}{\ensuremath{\mathbf{\hat{C}}}}

\def\tildebtheta{{\mbox{\boldmath $\tilde{\theta}$}}}

\def\bbzeta{{\mbox{\boldmath $\zeta$}}}
\def\bbomega{{\mbox{\boldmath $\omega$}}}
\def\bbOmega{{\mbox{\boldmath $\Omega$}}}
\def\bbzeta{{\mbox{\boldmath $\zeta$}}}

\newcommand{\eye}[1]{\ensuremath{\mathbf{I}_{#1}}}


\allowdisplaybreaks
\addtolength{\dbltextfloatsep}{-1mm}
\addtolength{\dblfloatsep}{-1mm}
\setlength{\belowcaptionskip}{-.5mm}
\addtolength{\abovedisplayskip}{-1mm}
\addtolength{\belowdisplayskip}{-1mm}
\setlength\textfloatsep{0.65\baselineskip plus 3pt minus 2pt}
\setlength\floatsep{0.75\baselineskip plus 3pt minus 2pt}
\setlength\intextsep{0.75\baselineskip plus 3pt minus 2 pt}

\begin{document}

\title{\LARGE \bf
A Direct Algorithm for Multi-Gyroscope Infield Calibration
}

\author{Tianheng Wang and Stergios I. Roumeliotis
%
\thanks{The authors are with Apple Inc., Cupertino, CA 95014, USA.
Email: {\tt\small \{tianheng$\_$wang2|stergios\}@apple.com}}%
}

\maketitle
\thispagestyle{plain} 
\pagestyle{plain}     


\begin{abstract}
In this paper, we address the problem of estimating the rotational extrinsics, as well as the scale factors of two gyroscopes rigidly mounted on the same device.
In particular, we formulate the problem as a least-squares minimization and introduce a {\em direct} algorithm that computes the estimated quantities without any iterations, hence avoiding local minima and improving  efficiency.
Furthermore, we show that the rotational extrinsics are observable while the scale factors can be determined up to global scale for general configurations of the gyroscopes. 
To this end, we also study special placements of the gyroscopes where a pair, or all, of their axes are parallel and analyze their impact on the scale factors' observability.
%
%
Lastly, we evaluate our algorithm in simulations and real-world experiments to assess its performance as a function of key motion and sensor characteristics. 
\end{abstract}

\section{Introduction and Related Work}

Most mobile and portable devices (e.g., robots, cell phones, wearables) are equipped with an inertial measurement unit (IMU) for estimating its motion, often in conjunction with additional sensors, such as cameras, laser scanners, ultrawideband, or ultrasound sensors~\cite{Mourikis2007_ICRA,Li2013_IJRR,leutenegger2015_IJRR,Wu2015RSS,Tien2019IJRR,Qin2018_TRO,Geneva2020ICRA,Lee2021ICRA,RahmanIJRR2022,Zuo2019IROS,Zuo2020IROS,Zheng2022,Gao2022}.
An IMU typically comprises a 3-axial accelerometer and a 3-axial gyroscope each measuring linear accelerations and rotational velocities, respectively, expressed in its own frame of reference.
Using more than one IMU has the potential to increase motion estimation accuracy by fusing their measurements after accounting for their {\em extrinsics}~\cite{Eckenhoff2021TRO,Zhang2020RAL,Yang2023arXiv}.

Multi-IMU extrinsics can be determined in a number of ways.\footnote{Our focus  is on multi-IMU calibration. For single-IMU calibration, the interested reader is referred to~\cite{Zhang2014_SJ,Tedaldi2014ICRA,Yang2023TRO} and references therein.}
In certain cases, IMUs are calibrated for both their extrinsics and intrinsics by employing specialized equipment such as a rate table~\cite{Cho2005,rohac2015}, in conjunction with a camera and using a calibration target~\cite{rehder2016extending}, or an external optical system~\cite{Jadid2019}.
%
Although such approaches can provide excellent calibration accuracy, they have high cost, due to their dependence on hardware infrastructure.
Additionally, their results are valid potentially for {\em limited time}, as both extrinsics and intrinsics may change with time and/or device use due to, e.g., humidity, temperature, and mechanical stress.  
For this reason, recent research has focused on {\em infield} calibration of IMUs.
When additional sensors, such as a camera, are available, the IMUs extrinsics can be estimated along with the device’s pose as part of a visual-inertial algorithm~\cite{Eckenhoff2021TRO}.
Such approaches, however, have a number of limitations. 
Firstly, the processing cost can be quite high due to the dependence on the aiding camera(s) and the requirement to extract, match, and process visual features within the visual-inertial estimator.
Secondly, these methods can only be used when additional sensors are present.
Moreover, even when the aiding sensor, such as a camera, is available, the operating conditions (e.g., lighting, distance to the scene, texture, motion) may be unfavorable and negatively impact the calibration quality.
Lastly, a joint localization, mapping, and calibration problem is typically high dimensional, nonlinear and thus there are no guarantees for convergence to the global minimum,


Besides the potential of multiple IMUs to increase the accuracy of visual-inertial state estimation, there exist other use cases, where the key parameters to 
%
be estimated and monitored are the {\em rotational extrinsics between the gyroscopes}.
%
Specifically, gyroscopes can be attached on the back of the binocular displays of a head-mounted device, or on ultrasound sensors installed on an underwater robot. 
In these applications, tracking the extrinsics of the gyroscopes for changes due to, e.g., temperature or shock, allows us to compensate for binocular disparity in the displays ~\cite{DisplaySystems2015,SelfTR1986}, or account for changes in the viewing directions of exteroceptive sensors with non-overlapping fields of view~\cite{RahmanIJRR2022}.
%
%
Achieving this without using any additional sensors besides IMUs is highly desirable especially when considering operating in challenging  conditions or under processing constraints.
%

Alternative approaches that directly calibrate the extrinsics of pairs of IMUs include the works of~\cite{Lee2022RAL} and~\cite{Yu2023ICRA} where rotation between the gyroscopes and translation of the accelerometers are estimated by assuming {\em known and constant} intrinsics (scale factors and skewness).
This assumption, however, is often violated in practice especially for the {\em scale factors} of gyroscopes found in MEMS IMUs, which drift due to temperature, humidity, shock, and mechanical strain.
%

%
%

To address the limitations of previous approaches, in this paper we make the following main contributions:
\begin{enumerate}
    \item We formulate the problem of estimating the rotational extrinsics and intrinsics of a pair of gyroscopes as  least-squares (LS) and introduce an algorithm for {\em directly} computing the observable subset of them.
    
    \item We show that the extrinsics and  scale factors, up to a global scale, are, in general, observable. Additionally, we consider degenerate cases of gyroscope placement and analyze their impact on scale factor observability.
    
    %
    %
    \item We evaluate the presented algorithm in simulations and experiments to assess its performance as a function of key motion and sensor parameters. Additionally, we evaluate the theoretically predicted impact of skewness error on the resulting calibration accuracy. Lastly, we consider the effect of device flexibility and propose a method for detecting and rejecting rotation data during the times that the rigidity assumption is violated.
    
\end{enumerate}

\section{Problem Formulation}
\label{sec:method}

Our objective is to use angular velocity measurements from two gyroscopes to estimate their 3~degrees of freedom (d.o.f.) rotational extrinsics and scale factors. 
In what follows, we assume that the gyroscopes' data are properly time stamped and re-sampled\footnote{Any unknown time offset between the gyroscopes is typically computed by correlating their measurements' {\em norms}, and then compensated for via time alignment. Resampling can be done via cubic-spline interpolation.} to ensure that their measurements correspond to the same time instants.

\subsection{Problem Formulation}
Each of the gyroscopes' measurements, ${}^{i} \bbomega_m(k), ~i=1,2$, at time-step $k$ can be written as
\begin{eqnarray} \label{eq:omega_meas}
{}^{i} \bbomega_m(k) = \mathbf{S}_i  ~{}^{i} \bbomega(k) + \mathbf{b}_i(k) + \mathbf{n}_i(k) 
\end{eqnarray}
where ${}^{i} \bbomega(k) $ is the true rotational velocity expressed w.r.t. frame $\{ i \}$,  $\mathbf{b}_i(k)$ and $\mathbf{n}_i(k)$ are the bias and noise for the $i$th gyroscope, respectively, and $ \mathbf{S}_i $ is the {\em upper-triangular} intrinsic matrix that represents the impact of scale and skewness.\footnote{Note that scale is often referred to as {\em sensitivity}, while skewness is also known as {\em cross-axis sensitivity} and typically represents the effect of non-orthogonality of the gyroscopes' axes.}
Since the two gyroscopes are rigidly attached to the same body, the true rotational velocities they measure are related through the rotational matrix $\mathbf{C} = {}^{1}_{2}\mathbf{C}$ that transforms vectors expressed w.r.t. frame $\{ 2\}$ to frame $\{ 1\}$, i.e.,
\begin{eqnarray}  \label{eq:omega_C}
{}^{1} \bbomega(k)  = \mathbf{C}  ~{}^{2} \bbomega(k)\,.
\end{eqnarray}
Solving \eqref{eq:omega_meas} for ${}^{i} \bbomega(k), ~i=1,2,$ and substituting in \eqref{eq:omega_C}, yields
\begin{align}
{}^{1} \bbomega_m(k)  &=  \mathbf{A} {}^{2} \bbomega_m(k)  + \mathbf{b}(k)  + \mathbf{n}(k)
\end{align}
where we have defined
\begin{align} \label{eq:defineA}
\mathbf{A} &\coloneqq  \mathbf{S}_1 \mathbf{C}   \mathbf{S}_2 ^{-1}  \\
\label{eq:defineb}
\mathbf{b}(k)  &\coloneqq \mathbf{b}_1(k) -  \mathbf{A}  \mathbf{b}_2(k)  \\
\mathbf{n}(k)  &\coloneqq  \mathbf{n}_1(k)  -  \mathbf{A}  \mathbf{n}_2(k)\,. 
\end{align}
Here $\mathbf n_1(k)$ and $\mathbf n_2(k)$ are considered to be white Gaussian noise sequences with $\text{Cov} \{\mathbf n_i(k)\} = \sigma_i^2\mathbf I$. Since $\mathbf A$ is close to orthonormal matrix, we have
\begin{align}
    \text{Cov}\{\mathbf n(k)\} \simeq \sigma^2\mathbf I~~,\, ~~~\sigma^2 = \sigma_1^2 + \sigma_2^2.
\end{align}
Hereafter, we assume that the bias in each gyro is approximately constant for the duration of the experiment and ignore its dependence on time, which allows us to form the optimization problem:
\begin{eqnarray} \label{eq:cost}
\min_{\mathbf A, \mathbf b}~ \mathcal{C}(\mathbf A, \mathbf b) = \frac{1}{2\sigma^2} \sum_{k=1}^{N} \left\| {}^{1} \bbomega_m(k)  - \mathbf{A} {}^{2} \bbomega_m(k)  - \mathbf{b}  \right\|^2
\end{eqnarray}
%

\noindent
At this point, we make the following key observations:
\begin{enumerate}
    \item The optimization problem~\eqref{eq:cost} is {\em quadratic} in the {\em combined} bias $\mathbf{b}$  and  matrix $\mathbf A$. This motivates our approach to first {\em directly} solve for $\mathbf A$, and then decompose it into the gyroscopes' intrinsics and extrinsics [see~~\eqref{eq:defineA}].

    \item The time steps from which we use rotational velocity measurements to form the cost function in~\eqref{eq:cost} need {\em not} be consecutive. Instead, as shown in Section~\ref{sec:simulations}, one should select measurements from time instants when the rotational norm is large (and thus improve the signal to noise ratio), but also avoid periods of very fast motion where the rigidity assumption is violated.
    \item The $9$ elements of matrix $\mathbf{A}$ are functions of the $15$ unknown calibration d.o.f. corresponding to the $6$ d.o.f of each of the gyroscopes' intrinsics in $\mathbf{S}_i$, $i=1,2$ and $3$ d.o.f. of the rotation matrix $\mathbf C$.
\end{enumerate}

Due to the limited information available to the two gyroscopes when in motion, potentially up to 9 out of the 15 unknown d.o.f in $\mathbf{A}$ can be determined. 
%
Besides the extrinsics, the gyroscopes' scale factors are usually susceptible to drift.
In contrast, the skewness remains approximately constant over a wide range of conditions, and its impact on accuracy can be theoretically predicted as confirmed in Section~\ref{sec:exp_skewness}.
%
%
%
For this reason, from here on we assume that the gyroscopes' measurements have been compensated for skewness using the values provided by the manufacturer, and model the intrinsics as a diagonal matrix of scale factors:
\begin{align} \label{eq:S_i}
    \mathbf{S}_i  &\coloneqq \diag\{ {}^{i}s_x, {}^{i}s_y, {}^{i}s_z \}\,.
\end{align}
This reduces the dimensionality of the problem from $15$ to $9$ and makes a potential decomposition of matrix $\mathbf A$ into its factors possible.
Note, however, that each of the 9 elements of $\mathbf A$ is a {\em $4$th-order polynomial} in the unknowns.
Specifically, $\mathbf A$ is a bilinear function of the scale-factor matrices $\mathbf{S}_1$ and $\mathbf{S}_2^{-1}$. Additionally, the elements of the orientation matrix $\mathbf{C}$ are $2$nd-order polynomials of the orientation expressed using, e.g., Cayley or quaternion parameterization~\cite{Craig2006Book} .
%
%
%
The optimization problem in~\eqref{eq:cost} could be solved by linearizing $\mathbf A$ in (\ref{eq:defineA}) w.r.t. the elements of $\mathbf{S}_i$, $i=1,2$ and $\mathbf{C}$, and iteratively minimizing the cost function using, e.g., Gauss-Newton. 
%
%
Such approach, however, is susceptible to local minima and may require many iterations to converge.
For this reason, in the next section, we present the main contribution of this paper: a \textit{direct solver} that is able to determine the global minimum of~\eqref{eq:cost} in a single step.




\subsection{Direct Least Squares Algorithm}
%


The optimization problem (\ref{eq:cost}) can be solved as follows
\begin{align}
    \min_{\mathbf A,\mathbf b}~\mathcal C(\mathbf A,\mathbf b)
    &\Leftrightarrow
    \min_{\mathbf A}\,\min_{\mathbf b | \mathbf A}~\mathcal C(\mathbf A,\mathbf b) \label{eq:optimization}
\end{align}
where $\min_{\mathbf b | \mathbf A}$ means minimizing over $\mathbf b$ given $\mathbf A$. The inner problem can be solved by setting the derivative of $\mathcal{C}$ in (\ref{eq:cost}) with respect to $\mathbf{b}$ equal to zero, which yields:
\begin{align}
\nabla_{\mathbf{b}} \mathcal{C} = 0 &\Rightarrow \sum_{k=1}^{N} \left({}^{1} \bbomega_m(k)  - \mathbf{A} {}^{2} \bbomega_m(k)  - \mathbf{b}  \right) = 0\\
 &\Rightarrow
 \mathbf{b^*}(\mathbf A)
 %
 =  {}^{1} \bar{ \bbomega}_m - \mathbf{A}   {}^{2} \bar{ \bbomega}_m \label{eq:bias}
\end{align}
where
%
%
%
\begin{align}
 {}^{i} \bar{ \bbomega}_m  &\coloneqq  \frac{1}{N}  \sum_{k=1}^{N} {}^{i} \bbomega_m(k)\,,~~~ i=1,2\,.
\end{align}
After substituting $ \mathbf{b} = \mathbf{b^*}(\mathbf A)$ from \eqref{eq:bias} into \eqref{eq:optimization}, we have
\begin{align}
    \min_{\mathbf A,\mathbf b}~\mathcal C(\mathbf A,\mathbf b)
    &=
    \min_{\mathbf A}~\mathcal C(\mathbf A,\mathbf b^*(\mathbf A))
    =
    \min_{\mathbf A}~\mathcal C'(\mathbf A)
\end{align}
where
\begin{align} \label{eq:cost2}
\mathcal C'(\mathbf A) &= \frac{1}{2\sigma^2} \sum_{k=1}^{N} \left\| {}^{1} \breve{\bbomega}_m(k)  - \mathbf{A} {}^{2} \breve{\bbomega}_m(k)    \right\|^2
\end{align}
in which
\begin{align} \label{eq:omega_shift}
 {}^{i} \breve{\bbomega}_m(k) &\coloneqq  {}^{i} \bbomega_m(k)  - {}^{i} \bar{ \bbomega}_m   ~~,~ i=1,2\,.
\end{align}
In order to minimize \eqref{eq:cost2} over $\mathbf A$, we form the following matrix equation
\begin{align}
\left[ \, {}^{1} \breve{\bbomega}_m(1) \, \dots \,  {}^{1} \breve{\bbomega}_m(N) \,\right] &= \mathbf{A}  \left[ \, {}^{2} \breve{\bbomega}_m(1) \, \dots \,  {}^{2} \breve{\bbomega}_m(N) \,\right]  \nonumber
\\
%
\Rightarrow ~~\bigl( \bbOmega_2  \bbOmega_2^\top   \bigr) \cdot \mathbf{A^*}  &= \bbOmega_1 \bbOmega_2^\top  \label{eq:optimalA}
\end{align}
where
\begin{align}
\bbOmega_i &\coloneqq \left[\,  {}^{i} \breve{\bbomega}_m(1) ~ \dots ~  {}^{i} \breve{\bbomega}_m(N)\, \right] ~~,~ i=1,2\,.  \label{eq:Omega_i}
\end{align}
\newtheorem{lemma}{Lemma}
\begin{lemma}\label{lem1}
To recover the matrix $\mathbf{A}$ 
in~\eqref{eq:optimalA}, 
1) at least 4 pairs of measurements that span $3$ d.o.f. need to be available, 
and 2) assuming that ${}^2\bbomega_m(k)$, $k=1,2,3$, span $3$ d.o.f.,
for ${}^2\bbomega_m(4) = \sum_{k=1}^3 \alpha_k {}^2\bbomega_m(k)$, $\sum_{k=1}^3 \alpha_k \neq 1$ is required.
\end{lemma}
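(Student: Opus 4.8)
The plan is to reduce both parts of the lemma to a single rank condition on the centered data matrix $\bbOmega_2$ of \eqref{eq:Omega_i}. The normal equation \eqref{eq:optimalA} determines $\mathbf{A}^*$ uniquely if and only if the $3\times 3$ Gram matrix $\bbOmega_2\bbOmega_2^\top$ is invertible, which holds exactly when the columns of $\bbOmega_2$ --- the centered measurements ${}^2\breve{\bbomega}_m(k)$ of \eqref{eq:omega_shift} --- span $\mathbb{R}^3$. Both claims therefore amount to characterizing when these centered vectors attain rank $3$. The one structural fact I would exploit throughout is that centering forces $\sum_{k=1}^N {}^2\breve{\bbomega}_m(k) = \mathbf{0}$, so the $N$ centered vectors always satisfy at least one linear dependency irrespective of the raw data.

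For part 1, this zero-sum constraint settles the matter immediately. With only three measurements the three centered vectors sum to zero, so the third equals minus the sum of the first two; they span at most a $2$-dimensional subspace and $\bbOmega_2\bbOmega_2^\top$ is necessarily singular --- \emph{even if} the raw ${}^2\bbomega_m(k)$, $k=1,2,3$, span $3$ d.o.f. Together with the obvious requirement that the raw data exercise all three axes, this forces at least four measurements, as claimed.

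For part 2, I would set $N=4$, write $\mathbf{w}_k \coloneqq {}^2\bbomega_m(k)$ with $\mathbf{w}_1,\mathbf{w}_2,\mathbf{w}_3$ linearly independent, and take $\mathbf{w}_4 = \sum_{k=1}^3\alpha_k\mathbf{w}_k$. Computing the mean $\bar{\mathbf{w}} = \tfrac14\sum_{k=1}^3(1+\alpha_k)\mathbf{w}_k$ and substituting into \eqref{eq:omega_shift}, I would express the first three centered vectors in the basis $\{\mathbf{w}_1,\mathbf{w}_2,\mathbf{w}_3\}$; since the four centered vectors again sum to zero, the rank of all four equals that of these three, so it suffices to test their independence. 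The resulting change-of-basis matrix $\mathbf{M}$, whose $j$th column holds the coordinates of ${}^2\breve{\bbomega}_m(j)$, has the structure of a scaled identity minus a rank-one term, $\mathbf{M} = \tfrac14\bigl(4\,\eye{3} - \mathbf{a}\mathbf{1}^\top\bigr)$ with $\mathbf{a} = (1+\alpha_1,\,1+\alpha_2,\,1+\alpha_3)^\top$ and $\mathbf{1}=(1,1,1)^\top$.

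The crux is the determinant of $\mathbf{M}$, which I would evaluate via the matrix-determinant lemma: $\det(4\,\eye{3}-\mathbf{a}\mathbf{1}^\top) = 4^3\bigl(1-\tfrac14\mathbf{1}^\top\mathbf{a}\bigr)$, and since $\mathbf{1}^\top\mathbf{a} = 3+\sum_{k=1}^3\alpha_k$ this collapses to $16\bigl(1-\sum_{k=1}^3\alpha_k\bigr)$, giving $\det(\mathbf{M}) = \tfrac14\bigl(1-\sum_{k=1}^3\alpha_k\bigr)$. Hence the centered vectors span $\mathbb{R}^3$ exactly when $\sum_{k=1}^3\alpha_k\neq 1$, which is the stated condition; geometrically, $\sum_{k=1}^3\alpha_k = 1$ is precisely the case in which $\mathbf{w}_4$ is an \emph{affine} combination of $\mathbf{w}_1,\mathbf{w}_2,\mathbf{w}_3$, placing all four raw measurements in a common plane that centering cannot escape. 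I expect the main obstacle to be conceptual rather than computational --- namely recognizing that recoverability is governed by the affine configuration (the zero-sum artifact of centering) rather than the raw linear span, and spotting the rank-one structure that makes the determinant collapse; once that structure is isolated, the determinant lemma closes the argument in a line.
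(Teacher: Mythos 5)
Your proposal is correct and follows essentially the same route as the paper's proof: both reduce recoverability of $\mathbf{A}$ to $\mathrm{rank}(\bbOmega_2)=3$, dispose of $N=3$ via the rank-$2$ deficiency that centering introduces (your zero-sum dependency is exactly the paper's rank-$2$ centering matrix $\mathbf P_3$), and for $N=4$ analyze the same change-of-basis matrix --- your $\mathbf M = \tfrac14\bigl(4\,\eye{3}-\mathbf a\mathbf 1^\top\bigr)$ is the paper's $\tfrac14\mathbf D$, with matching determinant $\tfrac14\bigl(1-\sum_{k=1}^3\alpha_k\bigr)$. Your use of the matrix-determinant lemma and the affine-combination interpretation of $\sum_k\alpha_k=1$ are pleasant refinements, but the argument is the same.
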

\begin{proof}
See Appendix \ref{lem1_proof}.
\end{proof}
%
%
%

Once we have computed the optimal value of $\mathbf{A}$ [from \eqref{eq:optimalA}], we employ \eqref{eq:defineA} to recover the scale-factor matrices $\mathbf{S}_1$ and $\mathbf{S}_2$ for the two gyros, as well as the rotational matrix $\mathbf{C}$ between them. 
To do so, we first express \eqref{eq:defineA} w.r.t. $\mathbf{C}$:
\begin{eqnarray} \label{eq:CfromA}
\mathbf{C} = \mathbf{S}_1^{-1} \mathbf{A}   \mathbf{S}_2  
\end{eqnarray}
and employ the orthonormality constraint:
\begin{eqnarray} \label{eq:C_identity}
\mathbf{C} \mathbf{C} ^\top = \mathbf{I}_3 
~\Rightarrow~
 \mathbf{A}   \mathbf{S}_2^2  \mathbf{A} ^\top = \mathbf{S}_1^2 \,.
\end{eqnarray}
Defining the matrix $\mathbf{A}$ element-wise as $\mathbf{A} \coloneqq \left[ a_{ij} \right], ~i,j=1,2,3,$ and substituting from \eqref{eq:S_i}, \eqref{eq:C_identity} can be re-written as:
\begin{eqnarray} \label{eq:s_is}
\begin{bmatrix}
\mathbf{A}_I & -\mathbf{I}_3\\
\mathbf{A}_0 & \mathbf{0}_{3 \times 3}
\end{bmatrix}
\begin{bmatrix}
\mathbf s_2\\
\mathbf s_1
\end{bmatrix}
=
\begin{bmatrix}
\mathbf{0}_{3 \times 1}\\
\mathbf{0}_{3 \times 1}
\end{bmatrix} 
\end{eqnarray}
where the vectors $\mathbf{s}_1$ and $\mathbf{s}_2$ comprise the diagonal elements of the matrices  $\mathbf{S}_1^2$ and $\mathbf{S}_2^2$ [see \eqref{eq:S_i}], respectively, i.e.,
\begin{eqnarray} \label{eq_sis}
\mathbf s_i \coloneqq \begin{bmatrix} {}^{i}s_{x}^2 & {}^{i}s_{y}^2 & {}^{i}s_{z}^2
\end{bmatrix}^\top ~~,~ i=1,2 
\end{eqnarray}
%
and
\begin{align} \label{eq:AI}
\mathbf{A}_I &\coloneqq \begin{bmatrix}
a_{11}^2 & a_{12}^2 & a_{13}^2 \\
a_{21}^2 & a_{22}^2 & a_{23}^2 \\
a_{31}^2 & a_{32}^2 & a_{33}^2 
\end{bmatrix}
\\ \label{eq:A0}
\mathbf{A}_0 &\coloneqq \begin{bmatrix}
a_{11} a_{21}  & a_{12} a_{22}  & a_{13} a_{23}\\
a_{21} a_{31}  & a_{22} a_{32}  & a_{23} a_{33}\\
a_{31} a_{11}  & a_{32} a_{12}  & a_{33} a_{13}
\end{bmatrix}.
\end{align}

\subsection{Computational Complexity}
At this point, we should note that the proposed algorithm has linear complexity $\mathcal O(N)$ in the number of gyro samples $N$; this is due to~\eqref{eq:optimalA} where the product of two pairs of $3\times N$ matrices is computed before solving for the $3 \times 3$ matrix $\mathbf{A}$.
The rest of the operations involve inverting or finding the null space of $3\times 3$ matrices, and thus have constant cost.





\section{Observability Analysis} \label{sec:determination}

Hereafter, we turn our attention to the issue of determining how many d.o.f. are observable when considering two special and the general case of the gyroscopes'
 relative orientation. 

\begin{lemma} \label{lem2}
If the two gyroscopes are placed with their axes parallel to each other, then only 6 d.o.f. are observable: 3 for orientation and 3 for the pairwise ratios of the scale factors. 
\end{lemma}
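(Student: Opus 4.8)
The plan is to use the fact that mounting the two gyroscopes with parallel axes forces the extrinsic rotation in \eqref{eq:defineA} to be $\mathbf C = \mathbf I_3$, so that $\mathbf A = \mathbf S_1 \mathbf S_2^{-1}$ is \emph{diagonal}. First I would invoke Lemma~\ref{lem1} to note that, under its excitation conditions, the direct solver of \eqref{eq:optimalA} still recovers this $\mathbf A$ exactly; its three diagonal entries are precisely the pairwise scale-factor ratios $a_{ii} = {}^{1}s_i/{}^{2}s_i$, which are therefore observable and account for $3$ of the claimed $6$ d.o.f.

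Second, I would argue the orientation remains observable. From \eqref{eq:CfromA}, any admissible factorization $\mathbf C = \mathbf S_1^{-1}\mathbf A\,\mathbf S_2$ with diagonal, positive $\mathbf S_1,\mathbf S_2$ and diagonal $\mathbf A$ produces a diagonal $\mathbf C$ with entries $\mathrm{sign}(a_{ii}) = 1$, so the rotation constraint pins $\mathbf C = \mathbf I_3$ uniquely. Equivalently, at the infinitesimal level a rotation perturbation $\delta\mathbf C = \skewsymm{\boldsymbol{\phi}}$ enters $\delta\mathbf A$ only through the off-diagonal term $\mathbf S_1\,\delta\mathbf C\,\mathbf S_2^{-1}$, which is nonzero off the diagonal whenever $\boldsymbol{\phi} \neq \mathbf 0$; since $\mathbf A$ is observed to be diagonal, $\boldsymbol{\phi}$ must vanish. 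This yields the remaining $3$ observable d.o.f.

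Third, I would show the \emph{individual} scale factors cannot be recovered beyond those ratios. Substituting the diagonal $\mathbf A$ into \eqref{eq:AI}--\eqref{eq:A0} gives $\mathbf A_I = \diag\{a_{11}^2, a_{22}^2, a_{33}^2\}$ and $\mathbf A_0 = \mathbf 0_{3\times 3}$, so the coefficient matrix of the homogeneous system \eqref{eq:s_is} collapses to $\left[\begin{smallmatrix}\mathbf A_I & -\mathbf I_3\\ \mathbf 0 & \mathbf 0\end{smallmatrix}\right]$. Since $a_{ii} \neq 0$, this matrix has rank $3$ and hence a $3$-dimensional solution space, spanned by $(\mathbf s_2, \mathbf s_1) = (\mathbf e_i,\, a_{ii}^2\mathbf e_i)$ for $i=1,2,3$, with $\mathbf e_i$ the $i$th unit vector. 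Each such mode independently rescales the matched pair $({}^{1}s_i,{}^{2}s_i)$ on one axis while leaving both $\mathbf A$ and $\mathbf C$ fixed, so exactly $3$ d.o.f.\ are unobservable and $9 - 3 = 6$ are observable, completing the proof.

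The step I expect to be the main obstacle is making precise why the unobservable subspace grows from the single global-scale ambiguity of the general case to this $3$-d.o.f.\ per-axis ambiguity. The crux is that a diagonal $\mathbf A$ \emph{decouples} the three axes: the orthonormality relation \eqref{eq:C_identity} then constrains ${}^{1}s_i$ only in terms of ${}^{2}s_i$ within the same axis, with no cross-axis coupling. Concretely, it is the vanishing of $\mathbf A_0$ that removes the two independent constraints which, in the general configuration, reduce the scale ambiguity to a single global factor; I would therefore verify carefully both that $\mathbf A_0 \equiv \mathbf 0$ and that the surviving rank-$3$ system leaves precisely the three per-axis scalings free.
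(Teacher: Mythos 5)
Your core algebraic mechanism is exactly the paper's: show $\mathbf A_0 = \mathbf 0_{3\times 3}$ and $\mathbf A_I$ of rank $3$, so that \eqref{eq:s_is} collapses to $\mathbf A_I\,\mathbf s_2 = \mathbf s_1$, leaving a $3$-dimensional family of per-axis rescalings (hence only pairwise ratios observable) while the orientation is recovered by normalizing $\mathbf A$. However, there is a genuine gap in your opening reduction: placing the gyroscopes with parallel axes does \emph{not} force $\mathbf C = \mathbf I_3$. ``Parallel axes'' only means each axis of gyroscope~1 is parallel or anti-parallel to \emph{some} axis of gyroscope~2, so $\mathbf C$ is in general a \emph{signed permutation} matrix (e.g., ${}^1x \parallel {}^2y$, or ${}^1z \parallel -{}^2z$), and $\mathbf A = \mathbf S_1 \mathbf C \mathbf S_2^{-1}$ is a generalized permutation (monomial) matrix rather than a diagonal one --- this is precisely how the paper's proof states it. Nor can you wave this away by a change of axes ``without loss of generality'': the permutation and signs are part of the unknown orientation whose observability you are asserting, so they cannot be assumed known and pre-compensated (contrast Lemma~\ref{lem3}, whose footnote applies such a re-mapping only to reduce one \emph{given} configuration to another for the analysis, not inside an identifiability claim). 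Your assumption also silently breaks your second paragraph: with $\mathbf C$ a nontrivial signed permutation, admissible factorizations do not yield a diagonal $\mathbf C$ with positive diagonal, and $\theta$ is not pinned to zero; instead $\mathbf C$ is determined by converting $\mathbf A$ into a signed permutation matrix (normalizing its nonzero entries to $\pm 1$), which is the paper's closing step.

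The good news is that the fix is mechanical and your argument then goes through essentially verbatim: a monomial $\mathbf A$ still has exactly one nonzero entry per row and per column, so every product $a_{ij}a_{kj}$ with $i \neq k$ vanishes and $\mathbf A_0 \equiv \mathbf 0_{3\times 3}$, while $\mathbf A_I$ is a rank-$3$ monomial matrix; the null space of the collapsed system is spanned by $(\mathbf s_2, \mathbf s_1) = (\mathbf e_i,\, a_{\pi(i)i}^2\, \mathbf e_{\pi(i)})$, $i = 1,2,3$, where $\pi$ is the permutation encoded by $\mathbf C$, giving the three unobservable per-matched-axis scalings and hence $6$ observable d.o.f. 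Your infinitesimal argument for the rotation also survives in this generality (a perturbation $\skewsymm{\boldsymbol\phi}\,\mathbf C$ disturbs the zero pattern of $\mathbf A$), but as written --- tied to $\mathbf C = \mathbf I_3$ --- the proof covers only one of the $24$ parallel-axis configurations.
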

\begin{proof}
When the gyroscopes are placed with their axes parallel, the rotation matrix $\mathbf{C}$ becomes a {\em signed} permutation matrix and, hence, $\mathbf{A}$ a generalized permutation matrix. 
In this case, it is easy to see that $\mathbf{A}_I$ in~\eqref{eq:AI}, which comprises the squares of the elements of $\mathbf{A}$, will also be a generalized permutation matrix of rank 3, while $\mathbf{A}_0$ in~\eqref{eq:A0} will be $\mathbf{0}_{3\times 3}$, i.e., of rank~$0$, as each row/column of $\mathbf{A}$ contains only one non-zero element.
Thus,~\eqref{eq:s_is} becomes:
\begin{eqnarray}
    \mathbf{A}_I \cdot \mathbf{s}_2 = \mathbf{s}_1
\end{eqnarray}
from which we can only recover the pairwise ratios of the squared values of the scale factors corresponding to each of the gyroscopes parallel axes.
Lastly, the 3 d.o.f. of the extrinsics matrix $\mathbf{C}$ are computed by converting $\mathbf{A}$ into a signed permutation matrix.
\end{proof}
%
%
\begin{lemma}\label{lem3}
    If the two gyroscopes are placed so that only one pair of their axes are parallel,  then 7~d.o.f. are observable: 3 for the gyroscopes' orientation, 1 for the ratio of the scale factors corresponding to the common axis $\mathbf{e}_j$, and 3 for the remaining 4 scale factors.  
\end{lemma}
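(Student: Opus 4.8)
The plan is to exploit the special block structure that a single common axis imposes on $\mathbf{A}$ and then to count the dimension of the null space of the homogeneous system in~\eqref{eq:s_is}. First I would fix coordinates: relabeling axes if necessary, I may assume without loss of generality that the common axis is $\mathbf{e}_3$ of both frames, so that $\mathbf{C}$ is a rotation about $\mathbf{e}_3$ by some angle $\theta$. The hypothesis that \emph{only} one pair of axes is parallel forces $\theta$ to be generic, i.e. $\cos\theta \neq 0$ and $\sin\theta \neq 0$; otherwise a second pair would align and we would be back in the all-parallel setting of Lemma~\ref{lem2}. Substituting this $\mathbf{C}$ into $\mathbf{A}=\mathbf{S}_1\mathbf{C}\mathbf{S}_2^{-1}$ from~\eqref{eq:defineA} then shows that $\mathbf{A}$ is block diagonal, with a $2\times 2$ upper-left block whose four entries are all nonzero and a single scalar entry $a_{33}={}^{1}s_z/{}^{2}s_z$, while $a_{13}=a_{23}=a_{31}=a_{32}=0$.

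Next I would read off the resulting structure of $\mathbf{A}_I$ and $\mathbf{A}_0$ from~\eqref{eq:AI}--\eqref{eq:A0}. Since the third row and third column of $\mathbf{A}$ (apart from $a_{33}$) vanish, $\mathbf{A}_I$ keeps the same block form (a $2\times 2$ block plus the single entry $a_{33}^2$), whereas $\mathbf{A}_0$ collapses to a matrix whose only nonzero row is its first, namely $[\,a_{11}a_{21}\ \ a_{12}a_{22}\ \ 0\,]$. This is the key simplification: that surviving row is supported only on the columns multiplying ${}^{2}s_x^2$ and ${}^{2}s_y^2$, so in~\eqref{eq:s_is} the common-axis unknowns ${}^{1}s_z^2,{}^{2}s_z^2$ decouple entirely from the remaining four scale factors.

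With this structure in place I would analyze the $6\times 6$ coefficient matrix of~\eqref{eq:s_is} directly. Its top block contributes three rows carrying the independent columns of $-\mathbf{I}_3$, and the single nonzero row of the bottom block is independent of these (it has zeros in precisely the columns where the top rows carry their $-1$ entries); hence the rank is exactly $4$ and the null space is $2$-dimensional. I would then exhibit the two null directions explicitly. One of them rescales the pair $({}^{1}s_z^2,{}^{2}s_z^2)$ while holding their ratio $a_{33}^2$ fixed; the other rescales the quadruple $({}^{1}s_x^2,{}^{1}s_y^2,{}^{2}s_x^2,{}^{2}s_y^2)$ jointly while holding fixed its three internal ratios (the ratio ${}^{2}s_y^2/{}^{2}s_x^2$ being pinned by the lone $\mathbf{A}_0$ equation, and ${}^{1}s_x^2,{}^{1}s_y^2$ following from the top two rows). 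Each direction corresponds to an independent rescaling of $\mathbf{S}_1$ and $\mathbf{S}_2$ that leaves $\mathbf{A}$ (and hence the measurements) unchanged, so both are genuine unobservable modes. Removing these $2$ unobservable d.o.f. from the $6$ scale-factor d.o.f. leaves $4$ observable ones, split as $1$ common-axis ratio plus $3$ among the other four; together with the $3$ orientation d.o.f. recovered from $\mathbf{A}$ this yields the claimed total of $7$.

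The main obstacle I anticipate is pinning the rank at exactly $4$, equivalently showing the nullity is precisely $2$ rather than $1$. This is exactly the feature that distinguishes the present case from the general full-rank configuration (nullity $1$, i.e. global scale only) on one side and from the all-parallel case of Lemma~\ref{lem2} (where $\mathbf{A}_0$ vanishes completely) on the other. I would make the decoupling rigorous by partitioning~\eqref{eq:s_is} into a two-variable subsystem for the common-axis pair and a four-variable subsystem for the rest, each of which is independently consistent and each of which contributes exactly one null direction; the genericity $\cos\theta\,\sin\theta\neq 0$ is what guarantees that the surviving $\mathbf{A}_0$ row is nonzero, so that the four-variable subsystem pins all three of its ratios and does not lose any further rank.
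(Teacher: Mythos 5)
Your proof is correct and follows essentially the same route as the paper's: the same WLOG reduction to $\mathbf{C}(\mathbf{e}_3,\theta)$, the same zero structure of $\mathbf{A}_I$ and $\mathbf{A}_0$, and the same decoupling of~\eqref{eq:s_is} into the common-axis pair and the four remaining scale factors. The only differences are presentational --- you count the nullity of the full $6\times 6$ system and exhibit the two unobservable rescalings explicitly (also stating the genericity condition $\cos\theta\sin\theta\neq 0$, which the paper's $\theta\in\mathbb{R}^*$ leaves implicit), whereas the paper solves the decoupled subsystems and reconstructs $\mathbf{C}$ in closed form.
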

\begin{proof} 
Without loss of generality, we consider the case when the gyroscopes' $z$ axes are parallel and point in the same direction, i.e., their relative orientation corresponds to $\mathbf{C} = \mathbf{C}(\mathbf{e}_3, \theta)$,  where $\left[ \mathbf{e}_1~\mathbf{e}_2~\mathbf{e}_3\right] = \mathbf{I}_3$ and $\theta \in \mathbb{R}^*$. The analysis of all other cases is analogous after permuting the gyroscopes' axes so that their new $z$ axes are aligned.\footnote{For example, the case when ${}^{1}x \parallel {}^{2}y$, and point in opposite directions, can be mapped to the one detailed here by first pre-multiplying the measurements of gyroscopes~1 and~2 with the signed permutation matrices $\mathcal{}{P}_1 = \left[ \mathbf{e}_3~\mathbf{e}_2~\mathbf{e}_1~\right]$
 and $\mathcal{}{P}_2 = \left[ \mathbf{e}_1~-\mathbf{e}_3~-\mathbf{e}_2~\right]$, respectively.}


Since $\mathbf{A}$ has the same zeros structure as $\mathbf{C}$ [see~\eqref{eq:defineA}], the non-diagonal elements corresponding to the row/column of the axis of rotation $\mathbf{e}_j$ will be zero. 
Analogously, $\mathbf{A}_I$ has zeros at the same locations as $\mathbf{C}$, while $\mathbf{A}_0$ will have a right null space of rank one.
In particular, if $\mathbf{C} = \mathbf{C}(\mathbf{e}_3, \theta)$, then
\begin{align*} 
\mathbf{A}_I = \begin{bmatrix}
a_{11}^2 & a_{12}^2 & 0 \\
a_{21}^2 & a_{22}^2 & 0 \\
0 & 0 & a_{33}^2 
\end{bmatrix}
,~
\mathbf{A}_0 = \begin{bmatrix}
a_{11} a_{21}  & a_{12} a_{22}  & 0\\
0  & 0  & 0\\
0  & 0  & 0
\end{bmatrix}.
\end{align*}
Substituting in~\eqref{eq:s_is} and re-arranging terms yields a relation for the ratio of the $z$-axes' scale factors:
\begin{align} \label{eq:z_scale}
     a_{33}  = {}^{1}s_{z} / {}^{2}s_{z}
\end{align}
and one for the scale factors of the remaining four axes
\begin{align}
\begin{bmatrix}
a_{11}^2 & a_{12}^2 &  -1 & 0\\
a_{21}^2 & a_{22}^2 &  0 & -1\\
a_{11} a_{21} & a_{12} a_{22} &  0 & 0
\end{bmatrix}
\begin{bmatrix}
    {}^{2}s_{x}^2 \\ {}^{2}s_{y}^2 \\ {}^{1}s_{x}^2 \\ {}^{1}s_{y}^2
\end{bmatrix} = \mathbf{0}_{4 \times 1}\,.
\end{align}
As evident, the rows of the preceding matrix are linearly independent and hence the above equation has a unique right null space $\bbzeta$ from which the 4 remaining scale factors can be determined up to a global scale $\lambda \in \mathbb{R}^*$
\begin{align} \label{eq:xy_scale}
    \begin{bmatrix}
    {}^{2}s_{x} & {}^{2}s_{y} & {}^{1}s_{x} & {}^{1}s_{y}
\end{bmatrix}^\top = \lambda \cdot \breve{\bbzeta}
\end{align}
where $\breve{\bbzeta}(i) = \sqrt{\bbzeta(i)}$, $i=1, \hdots, 4$.
Lastly, substituting from~\eqref{eq:xy_scale} and~\eqref{eq:z_scale} in~\eqref{eq:CfromA}, yields the rotational matrix
\begin{align}
    \mathbf{C}(\mathbf{e}_3,\theta) = 
    \begin{bmatrix}
        a_{11} \frac{\breve \bbzeta(1)}{\breve \bbzeta(3)} & a_{12} \frac{\breve \bbzeta(2)}{\breve \bbzeta(3)} & 0 \\
        a_{21} \frac{\breve \bbzeta(1)}{\breve \bbzeta(4)} & a_{22} \frac{\breve \bbzeta(2)}{\breve \bbzeta(4)} & 0 \\
        0 & 0 & 1         
    \end{bmatrix}
\end{align}
with $\theta = {\rm atan2}(a_{21} \breve \bbzeta(1), a_{22} \breve \bbzeta(2))$.
\end{proof}
%
%
\begin{lemma}\label{lem4}
    If the two gyroscopes are placed so that none of their axes are parallel, then 8 d.o.f. are observable: 3 for the relative orientation and 5 for the gyroscopes' scale factors.
\end{lemma}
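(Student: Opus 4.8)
The plan is to reduce the claim to a single rank computation on the $6\times 6$ matrix of~\eqref{eq:s_is}, reusing the machinery already set up for Lemmas~\ref{lem2} and~\ref{lem3}. Once $\mathbf A$ is recovered (Lemma~\ref{lem1}), extracting $\mathbf S_1,\mathbf S_2,\mathbf C$ amounts to solving the homogeneous system~\eqref{eq:s_is} for the squared scale factors $\mathbf s_1,\mathbf s_2$. Denoting by $\mathbf M$ the $6\times 6$ system matrix, the top block gives $\mathbf s_1=\mathbf A_I\mathbf s_2$, so solutions are in one-to-one correspondence with $\ker\mathbf A_0$ and $\dim\ker\mathbf M = 3-\mathrm{rank}\,\mathbf A_0$. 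Because $(\mathbf s_1,\mathbf s_2)\mapsto\lambda^2(\mathbf s_1,\mathbf s_2)$ leaves~\eqref{eq:C_identity} invariant, there is always at least a one-dimensional global-scale ambiguity; equivalently the true $\mathbf s_2>0$ lies in $\ker\mathbf A_0$, so $\mathrm{rank}\,\mathbf A_0\le 2$. This mirrors Lemmas~\ref{lem2} and~\ref{lem3}, where $\mathrm{rank}\,\mathbf A_0$ equals $0$ and $1$, respectively; the whole lemma therefore rests on showing that, when no axes are parallel, $\mathrm{rank}\,\mathbf A_0=2$ exactly.

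First I would strip off the scale factors. Writing $a_{ij}={}^1 s_i\,c_{ij}/{}^2 s_j$ and factoring the common scale factors out of each row and column of $\mathbf A_0$ in~\eqref{eq:A0}, one obtains $\mathbf A_0=\mathbf D_L\,\mathbf K\,\mathbf D_R$ with invertible diagonal $\mathbf D_L,\mathbf D_R$ (the scale factors are nonzero) and $\mathbf K$ the matrix whose rows are the entry-wise products $\mathbf u_1\odot\mathbf u_2$, $\mathbf u_2\odot\mathbf u_3$, $\mathbf u_3\odot\mathbf u_1$ of the rows $\mathbf u_1,\mathbf u_2,\mathbf u_3$ of $\mathbf C$. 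Hence $\mathrm{rank}\,\mathbf A_0=\mathrm{rank}\,\mathbf K$. Since $\mathbf C$ has orthonormal rows, each such product is orthogonal to $\mathbf 1=[1\ 1\ 1]^\top$ (e.g. $\mathbf 1^\top(\mathbf u_1\odot\mathbf u_2)=\mathbf u_1^\top\mathbf u_2=0$), so all rows of $\mathbf K$ lie in the two-dimensional hyperplane $\mathbf 1^\perp$ and $\mathrm{rank}\,\mathbf K\le 2$ automatically, consistent with the global-scale ambiguity above.

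The crux, and the step I expect to be the main obstacle, is ruling out $\mathrm{rank}\,\mathbf K\le 1$ under the hypothesis that no axes are parallel, i.e. that no entry of $\mathbf C$ equals $\pm 1$. Suppose the three rows of $\mathbf K$ are all proportional to some $\mathbf w\in\mathbf 1^\perp$, say $\mathbf u_p\odot\mathbf u_q=\lambda_{pq}\mathbf w$. In the regular case where no $\lambda_{pq}$ and no entry $w_j$ vanishes, taking the appropriate products and ratios of the three relations recovers $c_{ij}^2=\kappa_i w_j$ for all $i,j$, i.e. the Hadamard square $\mathbf C\odot\mathbf C$ is rank one. But $\mathbf C\odot\mathbf C$ is doubly stochastic (unit rows and columns of $\mathbf C$), and the only rank-one doubly stochastic $3\times3$ matrix is $\tfrac13\mathbf 1\mathbf 1^\top$; this forces $c_{ij}^2$ to be independent of $j$, hence $\mathbf w\propto\mathbf 1$, contradicting $\mathbf w\in\mathbf 1^\perp\setminus\{\mathbf 0\}$. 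The boundary cases resolve directly: if some $w_j=0$ then all three products vanish in column $j$, forcing that column of $\mathbf C$ to have a single nonzero entry $\pm 1$; and if some $\mathbf u_p\odot\mathbf u_q=\mathbf 0$ then two orthonormal rows of $\mathbf C$ have disjoint supports in $\{1,2,3\}$, which again forces a $\pm 1$ entry. In every case a $\pm 1$ entry of $\mathbf C$ appears, i.e. a pair of axes is parallel, contradicting the hypothesis; therefore $\mathrm{rank}\,\mathbf K=\mathrm{rank}\,\mathbf A_0=2$.

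It then follows that $\dim\ker\mathbf M=1$, so the six squared scale factors, and hence $\mathbf S_1,\mathbf S_2$, are determined up to a single global scale $\lambda\in\mathbb R^*$, leaving $6-1=5$ observable scale d.o.f. Finally, substituting the recovered scales into~\eqref{eq:CfromA}, $\mathbf C=\mathbf S_1^{-1}\mathbf A\,\mathbf S_2$, the factor $\lambda$ cancels between $\mathbf S_1^{-1}$ and $\mathbf S_2$, so $\mathbf C$, and thus all $3$ orientation d.o.f., is fully observable. Combining the two counts yields $5+3=8$ observable d.o.f., as claimed.
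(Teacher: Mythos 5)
Your proposal is correct, and its recovery pipeline coincides with the paper's own: solve the homogeneous system \eqref{eq:s_is} by taking $\mathbf s_2^*$ in the null space of $\mathbf A_0$, set $\mathbf s_1^* = \mathbf A_I\,\mathbf s_2^*$, and recover $\mathbf C = \mathbf S_1^{-1}\mathbf A\,\mathbf S_2$ with the global scale cancelling. Where you genuinely diverge is on the key rank claim, which the paper does not actually prove: its proof \emph{asserts} that ``in the general case, where none of the elements of $\mathbf C$ is zero, matrix $\mathbf A_0$ has rank 2,'' and only argues the upper bound (rank $3$ would force $\mathbf s_1 = \mathbf s_2 = \mathbf 0$). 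You establish both bounds. Your factorization $\mathbf A_0 = \mathbf D_L \mathbf K \mathbf D_R$, with the rows of $\mathbf K$ being Hadamard products of the orthonormal rows of $\mathbf C$, gives $\mathrm{rank}\,\mathbf A_0 \le 2$ structurally, since $\mathbf 1^\top(\mathbf u_p \odot \mathbf u_q) = \mathbf u_p^\top \mathbf u_q = 0$ places every row of $\mathbf K$ in $\mathbf 1^\perp$; and your contradiction chain for $\mathrm{rank}\,\mathbf K \ge 2$ --- ratios yielding $c_{ij}^2 = \kappa_i w_j$, double stochasticity of $\mathbf C \odot \mathbf C$ forcing the rank-one case to be $\tfrac13\mathbf 1\mathbf 1^\top$, hence $\mathbf w \propto \mathbf 1$ contradicting $\mathbf w \in \mathbf 1^\perp\setminus\{\mathbf 0\}$ --- checks out, as do both boundary cases (a vanishing $w_j$ forces a column of $\mathbf C$ with a single $\pm 1$ entry; a vanishing row $\mathbf u_p \odot \mathbf u_q$ forces two unit rows with disjoint supports, hence again a $\pm 1$ entry). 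Notably, your hypothesis (no entry of $\mathbf C$ equal to $\pm 1$) is exactly the lemma's ``no parallel axes'' condition and is strictly weaker than the paper's ``no zero elements of $\mathbf C$'': a rotation can have zero entries without any pair of axes parallel (e.g., a row $(3/5,\,4/5,\,0)$ completed to a generic rotation), and in such configurations your argument still certifies $\mathrm{rank}\,\mathbf A_0 = 2$ while the paper's stated general case silently excludes them. In short, the paper's proof buys brevity and a direct algorithmic recipe (including the numerical eigenvector remark for $\mathbf A_0^\top\mathbf A_0$), while yours buys rigor: it closes the unproven rank-$2$ step and aligns the proof's hypothesis with the lemma's actual statement.
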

\begin{proof}
In the general case, where none of the elements of $\mathbf{C}$ is zero, matrix $\mathbf{A}_0$ has rank 2.\footnote{Due to numerical errors, the matrix $\mathbf{A}_0$ is almost never rank deficient. Thus, we typically compute $\mathbf{s}_2^* $ as the eigenvector corresponding to the smallest eigenvalue of  matrix $\mathbf{A}_0^\top\mathbf{A}_0$. } Note also, that $\mathbf{A}_0$ cannot be of rank 3, as this would result in $\mathbf{s}_2=\mathbf{0}_{3 \times 1}$ and $\mathbf{s}_1=\mathbf{0}_{3 \times 1}$, which is impossible.
From \eqref{eq:s_is}, we have $\mathbf A_0 \mathbf s_2 = \mathbf 0_{3\times 1}$ which allows us to compute the optimal value of  $\mathbf s_2$ and subsequently of $\mathbf s_1$ as:
    \begin{eqnarray}
    \mathbf s_2^* = \mathrm{null}\left(\mathbf{A}_0\right) ~~~~,\,~~~~ \mathbf s_1^* = \mathbf{A}_I \mathbf s_2^*\,.
    \end{eqnarray}
    Note that since $\mathbf s_2^*$ is a {\em unit vector}, the {\em absolute} scale for the gyros cannot be recovered.
Once $\mathbf s_1^*$ and $\mathbf s_2^*$ have been determined, we compute the corresponding $\mathbf{S}_1^*$ and $\mathbf{S}_2^*$ from \eqref{eq_sis} and \eqref{eq:S_i}.
Subsequently, we recover the unknown rotational matrix $\mathbf{C}$ from \eqref{eq:CfromA}, as
\begin{eqnarray} \label{eq:Copt}
\mathbf{C}^* = \mathbf{S}_1^{*-1} \mathbf{A}^*   \mathbf{S}_2^* \,.
\end{eqnarray}
Lastly, we ensure that $\mathbf{C}^* \in \mathrm{SO(3)}$ by finding the nearest orthonormal matrix to $\mathbf{C}^*$ as in \cite{horn1988closed}. 
\end{proof}
Summarizing the previous four Lemmas, we have shown that
\newtheorem{theorem}{Theorem}
\begin{theorem}
The relative orientation of two gyroscopes is always observable when 1) at least 4 pairs of measurements that span $3$ d.o.f. are available, and 2) assuming that ${}^2\bbomega_m(k)$, $k=1,2,3$, span $3$ d.o.f.,
for ${}^2\bbomega_m(4) = \sum_{k=1}^3 \alpha_k {}^2\bbomega_m(k)$, $\sum_{k=1}^3 \alpha_k \neq 1$ is required.
\end{theorem}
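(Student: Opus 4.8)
The plan is to obtain the theorem as an immediate consequence of Lemmas~\ref{lem1}--\ref{lem4}, which jointly account for the recovery of $\mathbf{A}$ and its decomposition across every possible relative placement of the two gyroscopes. First I would invoke Lemma~\ref{lem1}: conditions 1) and 2) in the theorem statement are verbatim the conditions under which the linear system~\eqref{eq:optimalA} has a unique solution, so the minimizer $\mathbf{A}$ of~\eqref{eq:cost2} is uniquely determined. It then remains to show that the three rotational d.o.f. of $\mathbf{C}$ can always be extracted from $\mathbf{A}$ via the factorization~\eqref{eq:defineA}, regardless of how the scale-factor observability may degrade.

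Next I would partition the admissible relative orientations by the number of pairs of gyroscope axes that are mutually parallel. The essential observation, and the step I expect to require the most care, is that this count can only be $3$, $1$, or $0$: if two distinct pairs of axes were parallel, then $\mathbf{C}$ would send two basis vectors of frame $\{2\}$ to signed basis vectors of frame $\{1\}$, and orthonormality of $\mathbf{C}$ would force the remaining column (being, up to sign, the cross product of the other two) to be a signed basis vector as well, making the third pair parallel too. Hence exactly two parallel pairs is impossible, and the three categories are mutually exclusive and collectively exhaustive over $\mathrm{SO}(3)$.

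Finally I would dispatch each category to its lemma. If all three axis pairs are parallel, Lemma~\ref{lem2} recovers $\mathbf{C}$ as the signed permutation matrix underlying the generalized permutation matrix $\mathbf{A}$. If exactly one pair is parallel, Lemma~\ref{lem3} recovers the rotation angle $\theta$ about the common axis, fixing all $3$ d.o.f. of $\mathbf{C}$. If no axes are parallel, Lemma~\ref{lem4} recovers $\mathbf{C}$ from~\eqref{eq:Copt} once the scale factors are known up to global scale. In each case the full orientation is determined even though the number of observable scale-factor d.o.f. differs ($3$, $4$, or $5$, respectively), and combining the three cases yields the claimed observability of the relative orientation, completing the proof.
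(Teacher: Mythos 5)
Your proposal is correct and takes essentially the same route as the paper, which presents this theorem explicitly as a summary of Lemmas~\ref{lem1}--\ref{lem4}: Lemma~\ref{lem1} supplies conditions 1) and 2) for unique recovery of $\mathbf{A}$, and Lemmas~\ref{lem2}--\ref{lem4} each extract the full 3~d.o.f. of $\mathbf{C}$ in the respective placement cases. Your explicit cross-product argument showing that exactly two parallel axis pairs is impossible (so the three cases are exhaustive) is a small point the paper leaves implicit, but it does not alter the approach.
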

\begin{theorem} \label{thm:observability}
The scale factors of two gyroscopes are  observable, up to a global scale, except for special cases:
\begin{itemize}
    \item If two of the gyroscopes' axes are parallel: 4 scale factors are observable, up to a global scale, and only the ratio of the remaining two, corresponding to the parallel axes, can be determined; 
    \item If all of the gyroscopes' axes are parallel, only the pairwise ratios of their scale factors can be determined. 
\end{itemize}
\end{theorem}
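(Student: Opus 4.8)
The plan is to treat Theorem~\ref{thm:observability} as a consolidation of Lemmas~\ref{lem2}--\ref{lem4}, which have already resolved the scale-factor observability in each admissible geometric configuration; throughout I assume $\mathbf{A}$ itself has been recovered from~\eqref{eq:optimalA} under the excitation conditions of the preceding theorem, so that only the decomposition~\eqref{eq:defineA} into scale factors remains. The organizing principle is the number of pairs of parallel axes between the two gyroscopes, since this quantity controls the zero pattern of the extrinsic rotation $\mathbf{C}$ and hence, via~\eqref{eq:defineA}, the zero pattern of $\mathbf{A}$ and the rank of $\mathbf{A}_0$ in~\eqref{eq:A0}. The latter is exactly what fixes the dimension of the recoverable scale-factor subspace through the linear system~\eqref{eq:s_is}.

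First I would show that the number of parallel axis pairs can only be $0$, $1$, or $3$. Parallelism of ${}^2\mathbf{e}_j$ with ${}^1\mathbf{e}_i$ means the $j$th column of $\mathbf{C}$ equals $\pm\mathbf{e}_i$. If two columns of $\mathbf{C}$ are signed standard basis vectors, say $\pm\mathbf{e}_{i_1}$ and $\pm\mathbf{e}_{i_2}$ (necessarily with $i_1 \neq i_2$ by orthonormality), then the third column, being orthonormal to both, must be $\pm\mathbf{e}_{i_3}$ with $i_3$ the remaining index. Hence $\mathbf{C}$ is a signed permutation matrix and all three pairs of axes are parallel. This rules out the case of exactly two parallel pairs and establishes that the configurations of Lemmas~\ref{lem2}--\ref{lem4} are exhaustive.

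With completeness in hand, the case analysis is immediate. In the generic case, no axes are parallel, every element of $\mathbf{C}$ (and of $\mathbf{A}$) is nonzero, $\mathbf{A}_0$ has rank~$2$, and Lemma~\ref{lem4} recovers all six scale factors up to a single global scale, i.e.\ five observable d.o.f.---the main claim of the theorem. When exactly one pair of axes is parallel, Lemma~\ref{lem3} applies: the common-axis scale factors are recoverable only through their ratio~\eqref{eq:z_scale}, while the remaining four are obtained up to a global scale from the null space~\eqref{eq:xy_scale}, which is the first bullet. When all axes are parallel, $\mathbf{C}$ is a signed permutation matrix, $\mathbf{A}_0 = \mathbf{0}_{3\times 3}$, and Lemma~\ref{lem2} leaves only the three pairwise ratios of scale factors, which is the second bullet.

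Since the lemmas carry the analytic content, the main obstacle is the completeness step above: making precise that ``exactly two parallel pairs'' is geometrically \emph{impossible} for a rotation in $\mathrm{SO}(3)$, rather than merely non-generic. I expect this to follow cleanly from the orthonormality-of-columns argument, which needs no appeal to genericity. A secondary point worth checking is that $\mathbf{A}_0$ has rank exactly~$2$ in the generic case---not~$3$---which Lemma~\ref{lem4} already settles by observing that rank~$3$ would force the impossible solution $\mathbf{s}_1 = \mathbf{s}_2 = \mathbf{0}_{3\times 1}$.
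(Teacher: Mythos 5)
Your proposal is correct and follows essentially the same route as the paper, which proves Theorem~\ref{thm:observability} simply by summarizing Lemmas~\ref{lem2}--\ref{lem4} as a case analysis over the number of parallel axis pairs. Your one addition---the orthonormality argument showing that exactly two parallel pairs is impossible in $\mathrm{SO}(3)$, so the lemmas' cases ($0$, $1$, or $3$ pairs) are exhaustive---is valid and makes explicit a completeness step the paper leaves implicit.
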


%
A key practical implication from Theorem~\ref{thm:observability}, is that when installing multiple gyroscopes on a device, care should be taken to place them so that they have no parallel axes. 
\subsection{Recovering global scale}
%
%
%
As shown in Theorem~\ref{thm:observability}, global scale is not observable without additional information, even when the gyroscopes are not in a special configuration.
To recover scale, one needs to consider either 
 {\em additional sensors} or {\em prior information}.
%
Specifically, if one or more cameras are also used in conjunction with a pair of IMUs in a visual-inertial estimator, then instead of six, only one d.o.f., that of global scale, would need to be estimated, hence reducing the dimensionality of the problem and improving accuracy.
On the other hand, if it is known that particular axes of the gyroscopes, e.g., the $x$ axes, have stable scale factors close to one (i.e., ${}^is_{x}\simeq 1$, $i=1,2$), then global scale can be recovered by solving the following constrained optimization problem:
%
%
%
\begin{align}
    \min_{{}^1s_{x},{}^2s_{x}}~f({}^1s_{x},{}^2s_{x}) 
    &= \bigl({}^1s_{x}-1\bigr)^2 + \bigl({}^2s_{x}-1\bigr)^2 \label{opt_refine_scale}
    \\
    \text{s.t.}~{}^1 s_{x} / {}^2 s_{x} &= \lambda
\end{align}
where $\lambda \coloneqq {}^1 s_{x} / {}^2 s_{x}$ is the scale factor ratio estimated from the direct method of Section~\ref{sec:determination}. The solution is given by
\begin{align}
    {}^1 s_{x}^* &= \frac{\lambda(\lambda+1)}{\lambda^2+1}~,~~
    {}^2 s_{x}^* = \frac{\lambda+1}{\lambda^2+1}\,.
\end{align}
The remaining scale factors ${}^1s_y$, ${}^1s_z$, ${}^2s_y$, and ${}^2s_z$ are recovered by maintaining their ratios w.r.t. ${}^1 s_{x}$ and ${}^2 s_{x}$. In particular
${}^i s_y^* = ({{}^i s_y}/{{}^i s_x }) {}^i s_{x}^*$
and
${}^i s_z^* = ({{}^i s_z}/{{}^i s_x }) {}^i s_{x}^*$,
where the parameters without $*$ are those before optimization.





\begin{figure}[t]
\centering
\includegraphics[trim=0 10mm 0mm 0,clip,width=0.9\columnwidth]{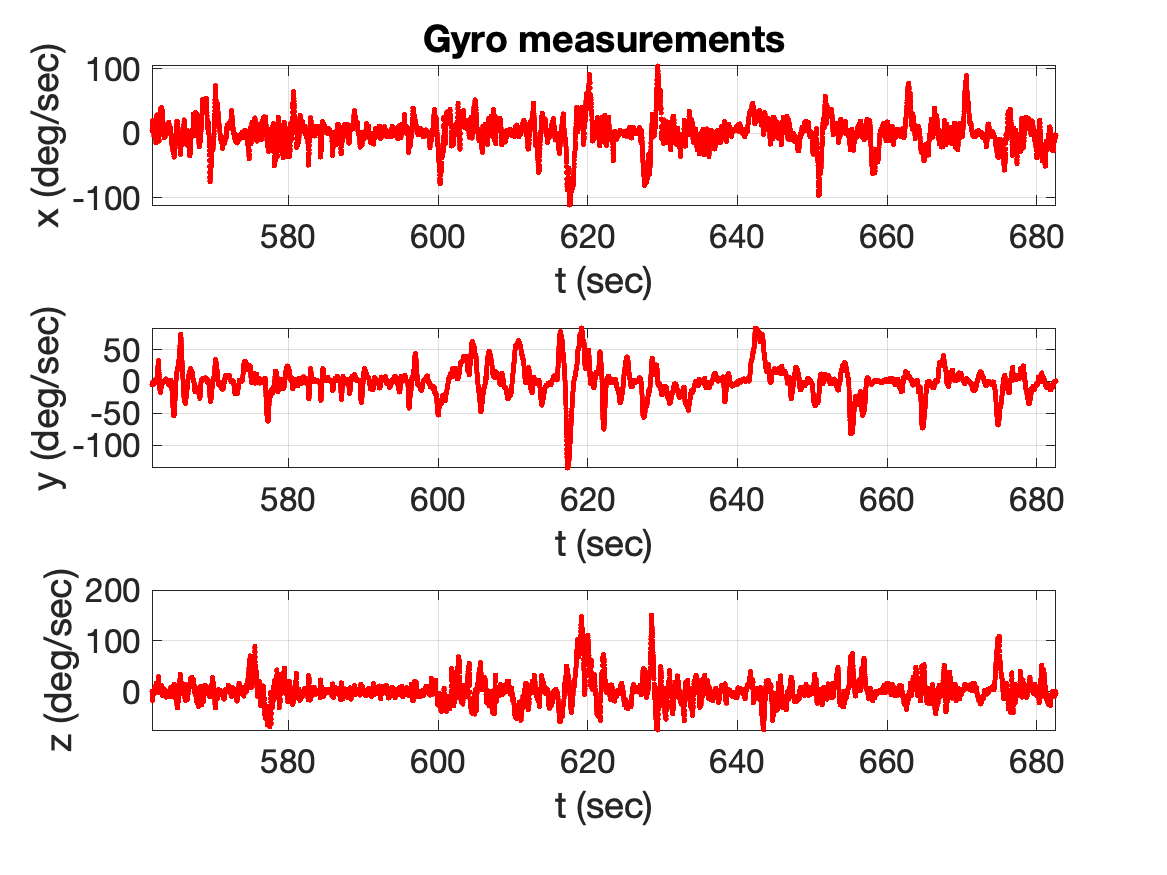}
\caption{Example of gyroscope $1$ measurements.}
\label{fig:gyro_motion}
\end{figure}

\section{Simulation Validation}
\label{sec:simulations}

\begin{figure*}[t]
\centering
\begin{minipage}{0.325\linewidth}
\centering
\includegraphics[trim=0 0mm 0mm 0,clip,width=\linewidth]{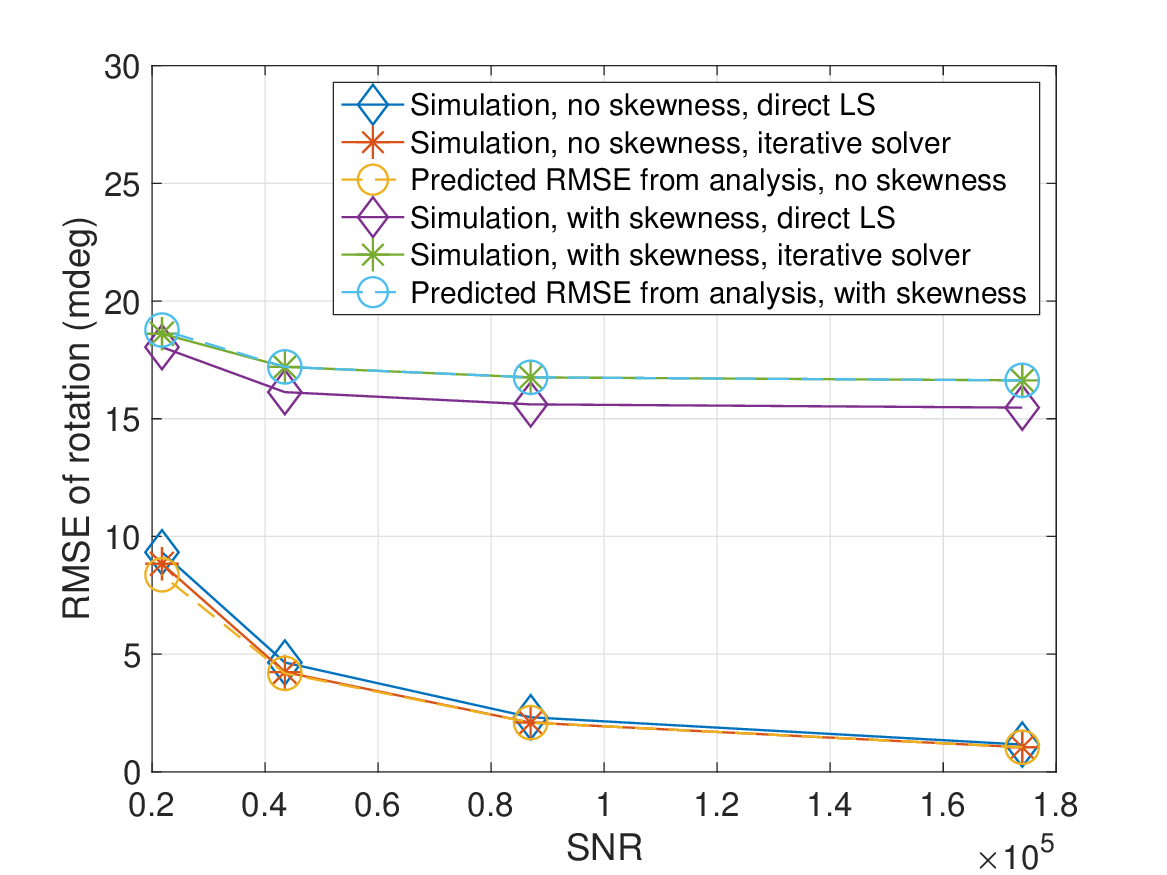}
\captionsetup{margin=0.5em}
\caption{
Rotation \acs{RMSE} w.r.t. \acs{SNR}. Circles represent the predicted \acs{RMSE} due to skewness \eqref{rot_err_analytical}.
}
\label{fig:rmse_rotation}
\end{minipage}
\hfill
\begin{minipage}{0.325\textwidth}
\includegraphics[trim=0 0mm 0mm 0,clip,width=\linewidth]{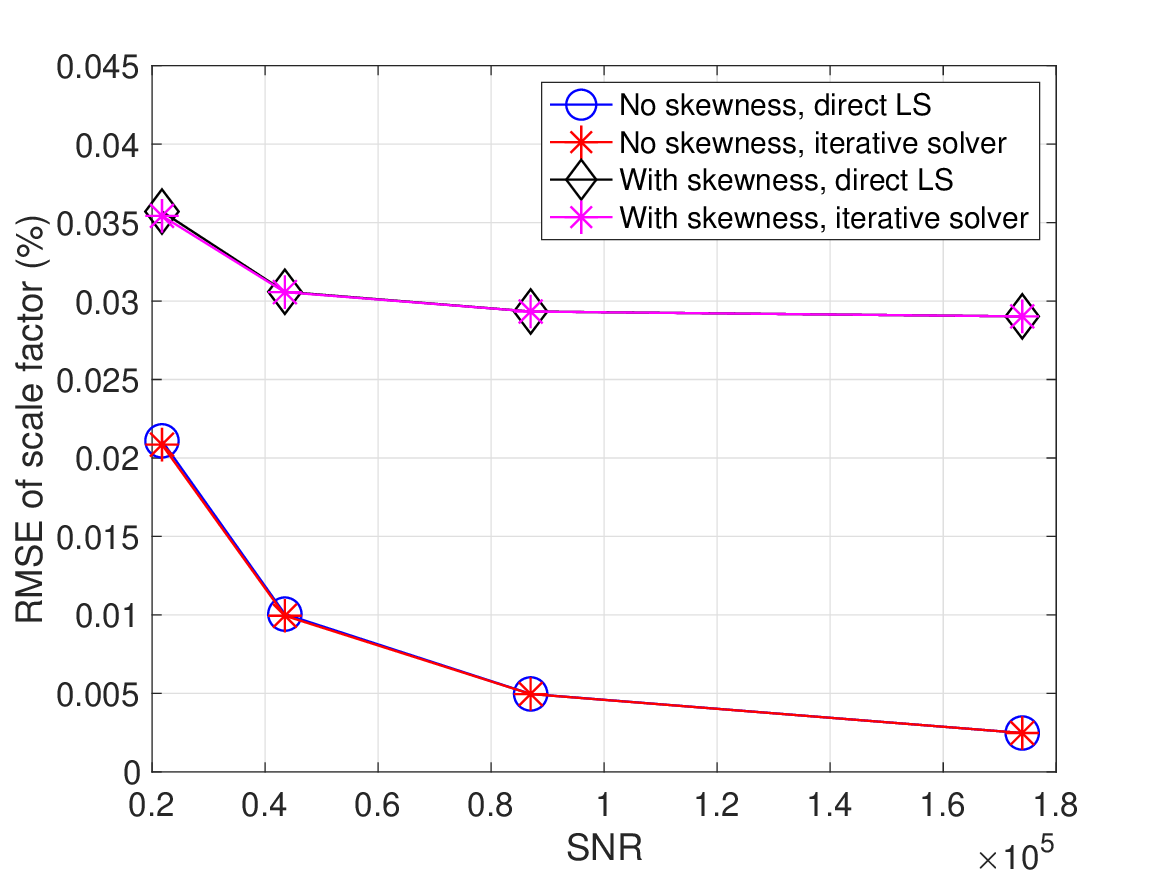}
\captionsetup{margin=0.5em}
\caption{
Scale factor \acs{RMSE} (\%) w.r.t. \acs{SNR}. \newline
}
\label{fig:rmse_scale}
\end{minipage}
\hfill
\begin{minipage}{0.325\linewidth}
\includegraphics[trim=0 0mm 0mm 0,clip,width=\linewidth]{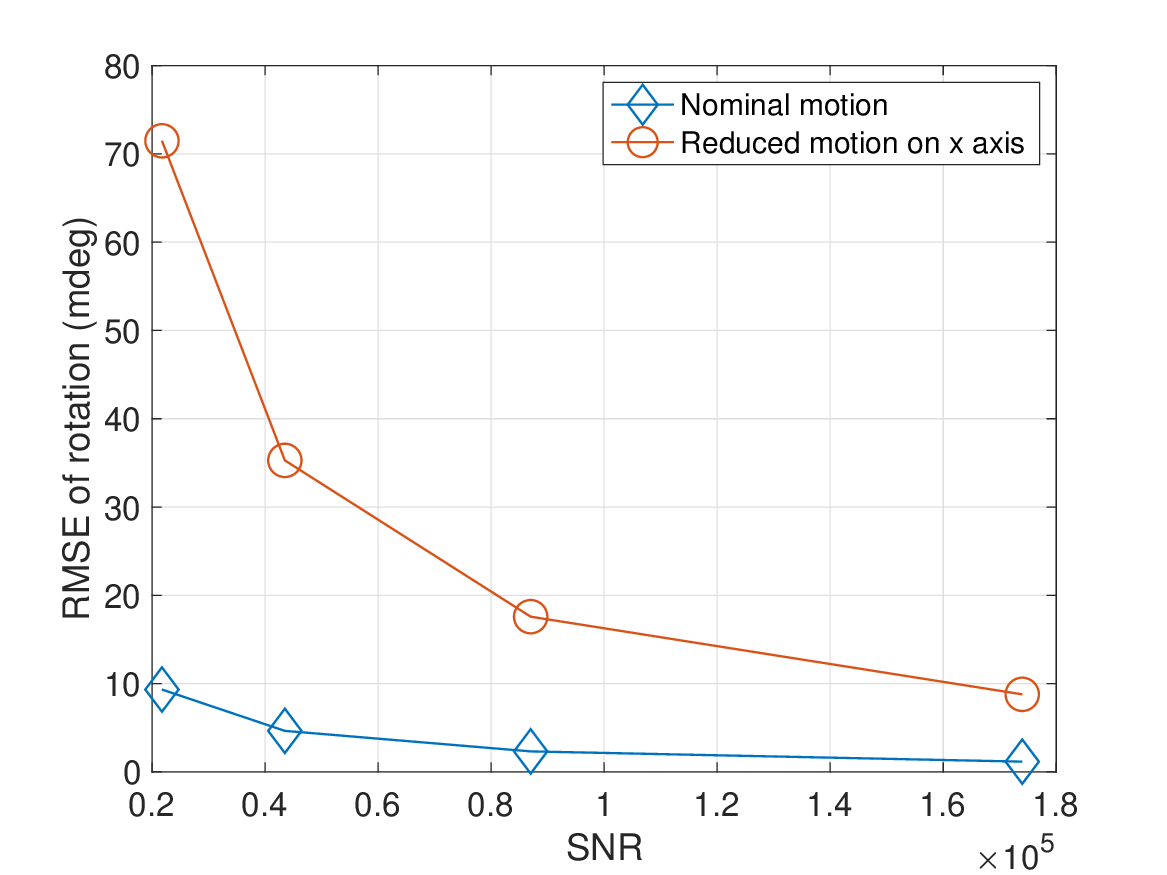}
\captionsetup{margin=0.5em}
\caption{
Rotation \acs{RMSE} with reduced velocity along $x$ axis, compared with \acs{RMSE} with nominal motion.
}
\label{fig:rmse_rotation_weak_axis}
\end{minipage}
\end{figure*}

In our simulations, we employ data from human motion (see Fig.~\ref{fig:gyro_motion}) to emulate the ideal measurements of gyroscope~$1$ sampled at $100\,\text{Hz}$. 
Then, we use the true rotational extrinsics $\mathbf{C}$ to generate the ideal data of gyroscope~$2$.
Next, we introduce intrinsics errors (in scale factors and skewness for some tests) and finally, we contaminate the gyroscopes' data [see (\ref{eq:omega_meas})] with additive noise $\mathbf n_i(k) \sim \mathcal N(\mathbf 0,\,\sigma^2\mathbf I)$,  
$\sigma = 0.1\, \si[per-mode=symbol]{\deg\per\second}$ and time-varying bias modelled as:
\begin{align}
    \mathbf b_i(k+1) = \mathbf b_i(k) + \boldsymbol\nu_i(k+1)
\end{align}
where $\boldsymbol\nu_i(k)\sim \mathcal N(\mathbf 0,\,\sigma_{\nu}^2 \mathbf I)$, $\sigma_{\nu} =57.3\,\si{\micro \deg\per\second}$.
%


%
%

Two key metrics include the \ac{RMSE} accuracy and \ac{SNR}:
%
\begin{align}
    \text{RMSE} \coloneqq \scalemath{0.9}{\biggl(\frac{1}{M}\sum_{i=1}^M e_i^2\biggr)^{\frac{1}{2}}} ~,~
    \text{SNR} = \scalemath{0.9}{\biggl(\frac{1}{\sigma^2}\sum_{k=1}^N \|{}^1\boldsymbol\omega_k\|^2 \biggr)^{\frac{1}{2}}} \label{eq:snr}
\end{align}
where $M$ is the number of Monte-Carlo runs.
The error for the $i$th Monte-Carlo run, $e_i$, is defined as follows, for orientation and intrinsics, respectively:
%
\begin{align}
      e_i &= \left\| \mathrm{Log}(\hat{\mathbf C} \mathbf C^\top) \right\|\\
      e_i 
      &= 
      \frac{1}{\sqrt{5}}
      \left\| \frac{1}{{}^1\hat s_x} 
      \left[\begin{array}{c}
      {}^1 \hat{\mathbf s} \\
      {}^2 \hat{\mathbf s}
      \end{array}
      \right]
      - 
      \frac{1}{{}^1 s_x}
      \left[\begin{array}{c}
      {}^1 {\mathbf s} \\
      {}^2 {\mathbf s}
      \end{array}
      \right]
      \right\|
\end{align}
where $\hat{\mathbf C}$ and $\mathbf C$ are the estimated and true rotations, and ${}^i\hat{\mathbf s} = [{}^i\hat s_x~{}^i\hat s_y~{}^i\hat s_z]^\top$ and
${}^i\mathbf s = [{}^i s_x~{}^i s_y~{}^i s_z]^\top$
are the estimated and true gyroscopes' scale factors for $i=1,2$.

%
Asymptotically, the \ac{RMSE} coincides with the square-root of error {\em covariance} for maximum-likelihood estimators \cite{kay1993fundamentals}.
Furthermore, a lower bound on the error covariance of the estimated orientation is derived in Appendix \ref{appendix:cov_snr}
\begin{align}
    \text{Tr}\bigl\{\mathbb E\{\mathbf P\} \bigr\} \geq 
 4.5 \bigl( \mathbb E\bigl\{ \text{SNR}^2 \} \bigr)^{-1}\,.
\end{align}
%
%
According to this expression, a lower bound on the \ac{RMSE} is expected to be approximately proportional to the inverse of \ac{SNR}, and hence to the
\begin{itemize}
    \item Inverse of the gyroscope velocity norm 
    \item Standard deviation of noise $\sigma$, and
    \item $1/\sqrt{N}$ assuming the data follow a stationary process.
\end{itemize}
%

\subsection{Performance with Known Skewness}

In order to validate the relationship between \ac{RMSE} and \ac{SNR}, we first consider the case without skewness error (i.e., the intrinsic matrices $\mathbf S_1$ and $\mathbf S_2$ are diagonal) and ``scale'' the \ac{SNR} of the true gyroscope data in Fig. \ref{fig:gyro_motion}, by factors of $0.5$, $1$, $2$, and $4$.
This is accomplished by appropriately scaling the velocities' norms, noise $\sigma$, or data sample size.
As evident from the results in
Fig.s~\ref{fig:rmse_rotation} and~\ref{fig:rmse_scale}, respectively, for the estimated rotation and scale factors, their \ac{RMSE}s are indeed proportional to the inverse of the \ac{SNR}.


Next, we turn our attention to the fact that imbalance in the rotational velocities along each of the three axes affects calibration accuracy.
%
To illustrate this, we purposely reduce the rates about the $x$ axis $10$ fold, while amplifying the ones around the $y$ and $z$ axes such that the \ac{SNR} remains the same.
As shown in Fig.~\ref{fig:rmse_rotation_weak_axis}, the rotational \ac{RMSE} increases significantly.
This is to be expected as the information matrix, and hence the covariance, depends on the individual axes' rotational velocities [see~\eqref{eq:Hk}].
For this reason, in practice, the measurements used for calibration should be selected so as to ensure sufficient motion along all three axes.

\subsection{Performance with Imperfect Skewness Calibration} \label{sec:exp_skewness}

Next, we investigate the impact of skewness errors.
%
In Appendix \ref{appendix:impact_of_skewness},
we derive the following closed-form expression for the rotation error due to skewness:
\begin{eqnarray}
    \tildebtheta \simeq \frac{1}{2} \left( \mathbf{w}_1 - \mathbf{C} \mathbf{w}_2 + \mathbf{n} \right) \label{rot_err_analytical}
\end{eqnarray}
where $\mathbf{n}$ is the LS error when solving for $\mathbf A$ from (\ref{eq:optimalA}), which goes to zero as the number of measurements increases, 
%
\begin{eqnarray}
\mathbf{w}_i \simeq \bigl[\, -\mathbf{\widetilde{S}}_i(2,3) ~~ \mathbf{\widetilde{S}}_i(1,3) ~ -\mathbf{\widetilde{S}}_i(1,2) \,\bigr]^\top  ,\, i=1, 2 \label{skewness_error}
\end{eqnarray}
and $\mathbf{\widetilde{S}}_i(1,2)$, $\mathbf{\widetilde{S}}_i(1,3)$, and $\mathbf{\widetilde{S}}_i(2,3)$ are the skewness errors in the corresponding intrinsics matrices.
According to~\eqref{rot_err_analytical}, the rotation error is asymptotically bounded by the skewness errors.
To assess this in simulation, 
the three upper off-diagonal elements in $\mathbf S_1$ and $\mathbf S_2$, are randomly drawn\footnote{The range of values used for skewness as well as for bias drift and additive noise are representative of MEMS gyroscopes' performance.} from $\mathcal{N}(0, \sigma_s^2)$ with $\sigma_s = 2.357\times 10^{-4}$.
As seen from the results depicted in Fig.s~\ref{fig:rmse_rotation} and~\ref{fig:rmse_scale}, there is a strong agreement between the estimated rotation error and that predicted from~\eqref{rot_err_analytical}.

Lastly, Fig.s \ref{fig:rmse_rotation} and \ref{fig:rmse_scale} confirm that the proposed {\em direct} LS algorithm achieves performance indistinguishable to that of a linearization-based iterative solver in Appendix \ref{appendix:iterative_solver}, without any iterations. 


\section{Experimental Results}

\begin{table}
\begin{center}
\caption{Orientation errors (\si{mdeg})}
\label{table_rot_error}
\begin{tabular}{ ccc }
\toprule 
$\theta_\text{x}$ & $\theta_\text{y}$ & $\theta_\text{z}$ \\
\midrule
23.52 & -72.82 &  82.39 \\
23.91 & -71.91  & 75.09 \\
26.48 & -72.02 &  77.28 \\
\bottomrule
\end{tabular}
\vspace{0.75em}
\caption{Scale factor errors (\%)}
\label{table_scale}
\begin{tabular}{ cccccc }
\toprule 
${}^1 s_x$ & ${}^1 s_y$ & ${}^1 s_z$ & ${}^2 s_x$ & ${}^2 s_y$ & ${}^2 s_z$ \\
\midrule
0.01 &  -0.04 & -0.01  & -0.01   &-0.08   &-0.05 \\
-0.01 &  -0.03 &  -0.03 &   0.01 &  -0.08    &  0.00 \\
-0.02  &  0.01  & -0.04 &   0.02  & -0.02  &  0.01 \\
\bottomrule
\end{tabular}
\end{center}
\end{table}

\subsection{Rigid Motion}

The proposed algorithm is employed for infield calibration of a device equipped with two IMUs.
In this process, we accumulate rotational velocity measurements till we reach an \ac{SNR} of $4\times 10^3$ per axis.
To evaluate performance, we use as \ac{GT} the parameters determined by a high-accuracy calibration station.
%
%
%
The errors in rotation and scale factors are shown in Tables ~\ref{table_rot_error} and~\ref{table_scale}, respectively.
In these tests, the global scale is resolved by employing~\eqref{opt_refine_scale} and considering as prior that the $x$ axes's scale factors remain close to $1$.
%
These results confirm the proposed infield algorithm's ability to achieve high calibration accuracy.



\subsection{Robustness to Device Flexibility}
 
Under fast motion, some devices may flex which violates the assumption of rigidity between the gyroscopes.
Using data from these time instants will negatively impact calibration performance.
In what follows, we present our approach for detecting and removing such measurements.

To better understand the effect of device flex we compute the residual, i.e., the difference in the two gyroscopes' measurements after accounting for intrinsics and extrinsics: 
\begin{align} \label{eq:residual}
    \mathbf r(k) 
    = 
    {}^1\breve{\boldsymbol\omega}_\text{m}(k)
    - \hat{\mathbf S}_1\hat{\mathbf C}\hat{\mathbf S}_2^{-1}
    {}^2\breve{\boldsymbol\omega}_\text{m}(k)\,.
\end{align}
As shown in Fig.~\ref{fig:residual_gyro}~(right), for rigid motions, the true rotational extrinsic is time-invariant, and the measurement residuals resemble Gaussian noise independent of the magnitude of rotation.
In contrast, when the device undergoes very fast motion and flexes, the true rotational extrinsic is time-variant, and the residuals become strongly correlated with the rotational velocity norm - see Fig.~\ref{fig:residual_gyro}~(left).




Hereafter, we present a simple but effective way to robustify the calibration algorithm 
against device flexibility.
First, we run the infield algorithm once to compute the residuals at each time step and their standard deviation $\sigma_r$.
Then, we remove gyroscope data whose residuals have magnitude greater than $3\sigma_r$, as well as measurements immediately before and after these time instants to account for hysteresis in the onset/settle of device flex.
%
%
Finally, we re-run the algorithm using the remaining gyroscope measurements.
As evident from the results shown in Table~\ref{table_remove_spike}, the rotation accuracy is greatly improved by following this approach.

\section{Conclusion}
\label{sec:conclusions}

In this paper, we address the problem of determining the {\em relative orientation} of two rigidly connected parts of a device that may have undergone plastic deformation. 
To do so, we considered data available from two 3-axial gyroscopes -- each attached on the corresponding part of interest -- and introduced a {\em direct} algorithm for determining the gyroscopes' extrinsics, as well as their scale factors, up to a global scale.
The main advantages of our algorithm is that it does not rely on specialized equipment or particular motions.
Instead, it can be employed infield without using any additional sensors and delivers high accuracy, comparable to that of a properly initialized iterative solver, in a single step.
As part of our main contributions, we analyzed the observability of the system comprising the two gyroscopes and showed that both extrinsics and scale factors (up to global scale) can be determined.
Furthermore, we assessed the impact of the gyroscopes' skewness error on the extrinsics accuracy.

%

Additionally, we validated the performance of the proposed algorithm both in simulation and real-world experiments, 
where established relations between the expected error and the SNR which is a function of motion profile, gyro noise, and the size of the data used,
and defined metrics for determining when there is sufficient information for accurate estimation and when calibration should be avoided due to device flexibility. Lastly, we introduced a simple but effective method for determining device flex and removing measurements that negatively affects estimation accuracy.


\begin{figure}[t]
\centering
\includegraphics[width=1\columnwidth]{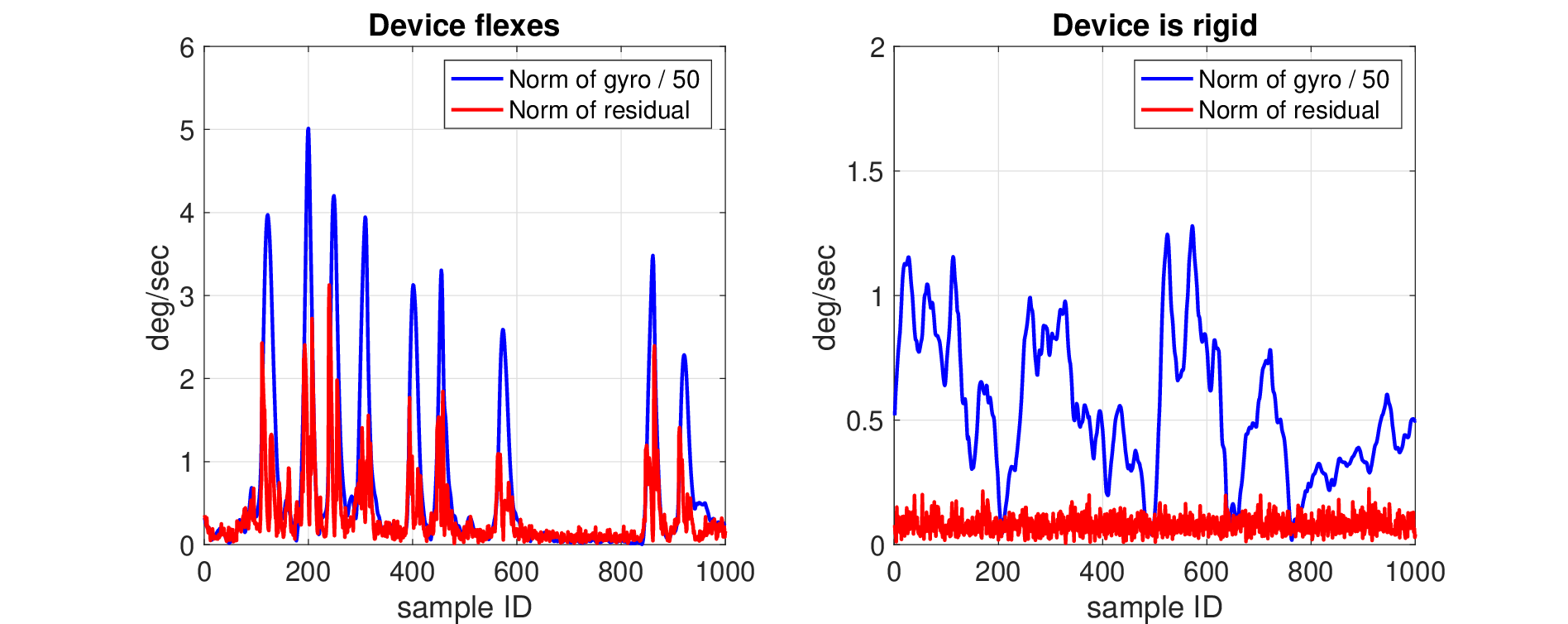}
\caption{Norm of residual vs. scaled norm of gyro data.}
\label{fig:residual_gyro}
\end{figure}

\begin{table}[th]
\begin{center}
\caption{Orientation errors \si{mdeg} before and after removing measurements via the proposed \textit{mitigation} scheme.}
\label{table_remove_spike}
\begin{tabular}{ cccc }
\toprule 
& $\theta_\text{x}$ & $\theta_\text{y}$ & $\theta_\text{z}$  \\
\midrule
Before &  19.24 & 985.92 & 401.91 \\
After  &  -28.07 & -14.05 & -3.09 \\
\bottomrule
\end{tabular}
\end{center}
\end{table}

\appendices

\section{Proof of Lemma \ref{lem1}}  \label{lem1_proof}
Note that the column vectors of $\bbOmega_2$ in (\ref{eq:Omega_i}) are \textit{re-centered} measurements ${}^{2}\breve{\bbomega}_m(k)$ as in (\ref{eq:omega_shift}).
To recover $\mathbf{A}$ from (\ref{eq:optimalA}), we need $\si{rank}(\bbOmega_2) = 3$.
Suppose only three measurements ${}^{2}\bbomega_m(k)$ spanning $3$ d.o.f., are available.
Also, let $\mathbf P_N$ be an $N\times N$ matrix whose elements are given by 
\begin{eqnarray}
\mathbf P_N(i,j) = 
    \begin{cases}
    (N-1)/N, &i=j \\
    -1/N, &\text{otherwise}\,.
    \end{cases}
\end{eqnarray}
Then $\bbOmega_2$ in (\ref{eq:Omega_i}) for $N=3$ can be written as
\begin{eqnarray} \label{eq:CfromA}
\bbOmega_2 &=
\left[\,
{}^{2}{\bbomega}_m(1) \;\;
{}^{2}{\bbomega}_m(2) \;\;
{}^{2}{\bbomega}_m(3)\,\right] \mathbf P_3\,.
\end{eqnarray}
Since the every row or column of $\mathbf P_3$ sum up to $0$, $\si{rank}(\mathbf P) = 2$. 
This implies $\si{rank}(\bbOmega_2) \le 2$.
For this reason, a $4$-th measurement ${}^{2}\bbomega_m(4)$, different from the previous linearly independent ones, is necessary. 

Since ${}^{2}\bbomega_m(k)$, $k = 1,2,3$, span $3$ d.o.f.,
\begin{eqnarray}
    {}^{2}\bbomega_m(4) = \alpha_1 {}^{2}\bbomega_m(1) + \alpha_2 {}^{2}\bbomega_m(2) + \alpha_3 {}^{2}\bbomega_m(3)
\end{eqnarray}
for some $\alpha_1$, $\alpha_2$, and $\alpha_3$. 
Then $\bbOmega_2$ in (\ref{eq:Omega_i}) for $N=4$ can be written as
\begin{align}
    \bbOmega_2 &= \left[\,
{}^{2}{\bbomega}_m(1) \;\;
{}^{2}{\bbomega}_m(2) \;\;
{}^{2}{\bbomega}_m(3) \;\;
{}^{2}{\bbomega}_m(4)\,\right] \mathbf P_4
    \\
    &=
\frac{1}{4} \left[\,
{}^{2}{\bbomega}_m(1) \;\;
{}^{2}{\bbomega}_m(2) \;\;
{}^{2}{\bbomega}_m(3)\,\right] 
\left[\,\mathbf D\;\;\mathbf d\,\right] \label{eq:omega_D_d}
\end{align}
where $\mathbf d = \left[\,-1+3\alpha_1\;\;-1+3\alpha_2\;\;-1+3\alpha_3\,\right]^\top$ and
\begin{align}
    \mathbf D = \left[\begin{array}{ccc}
    3-\alpha_1 & -1-\alpha_1 & -1-\alpha_1 \\
    -1-\alpha_2 & 3-\alpha_2 & -1-\alpha_2 \\
    -1-\alpha_3 & -1-\alpha_3 & 3-\alpha_3
    \end{array}\right]\,.
\end{align}
The \textit{re-centered} measurements are linearly dependent since
\begin{eqnarray}
    {}^{2}\breve\bbomega_m(1) + {}^{2}\breve\bbomega_m(2) + {}^{2}\breve\bbomega_m(3) + {}^{2}\breve\bbomega_m(4) = \mathbf 0_{3\times 1}\,.
\end{eqnarray}
Thus, to have $\si{rank}(\bbOmega_2) = 3$, its first three columns ${}^{2}\breve\bbomega_m(k)$, $k=1,2,3$, need to span $3$ d.o.f., 
which requires $\mathbf D$ in (\ref{eq:omega_D_d}) to be invertible. Since
\begin{eqnarray}
    \det(\mathbf D) = 16 \cdot (1 - \alpha_1 -\alpha_2 - \alpha_3)
\end{eqnarray}
$\alpha_1 +\alpha_2 + \alpha_3 \neq 1$ is required for $\mathbf D$ to be invertible.


\section{Lower Bound on Error Covariance} \label{appendix:cov_snr}
Consider the simplified case of zero bias and perfect intrinsics where only rotation is estimated. The residual for the $k$th pair of measurements is:
\begin{align}
	\mathbf r(k) 
	&= 
	{}^1{\boldsymbol\omega}_\text{m}(k) 
	- \hat{\mathbf C}\, {}^2{\boldsymbol\omega}_\text{m}(k) \label{residual}
 \\
	&\simeq
 {}^1 \boldsymbol\omega(k) + \mathbf n_1(k) \nonumber
        - (\mathbf I + \skewm{\tilde{\boldsymbol\theta}} ) \mathbf C 
	\bigl( {}^2 \boldsymbol\omega(k) + \mathbf n_2(k) \bigr) \nonumber \\
	&\simeq
	\skewm{ {}^1 \boldsymbol\omega(k) }
	\tilde{\boldsymbol\theta} 
	+  \mathbf n_{1}(k) + \mathbf C \,\mathbf n_{2}(k) \nonumber 
\end{align}
where $\hat{\mathbf C} \simeq (\mathbf I + \skewm{\tilde{\boldsymbol\theta}} ) \mathbf C$ with $\tilde{\boldsymbol\theta}$ being the rotation error. 
%
Furthermore, let $\text{Cov}\{\mathbf n_{12}\} = \sigma^2 \mathbf I$. Then, the information matrix from the $k$th measurement is:
\begin{align}
    \hspace{-1mm}\mathbf H_k
    &= \frac{1}{\sigma^2} \skewm{ {\boldsymbol\omega}(k) }^\top
    \skewm{ {\boldsymbol\omega}(k) } 
    \nonumber
    \\
    &=
    \frac{1}{\sigma^2}
    \left[\hspace{-1mm} 
    \begin{array}{ccc}
       \omega_{k,2}^2 + \omega_{k,3}^2  & -\omega_{k,1}\omega_{k,2} & -\omega_{k,1}\omega_{k,3} \\
       -\omega_{k,1}\omega_{k,2}  & \omega_{k,1}^2 + \omega_{k,3}^2 & -\omega_{k,2}\omega_{k,3} \\
        -\omega_{k,1}\omega_{k,3} & -\omega_{k,2}\omega_{k,3} & \omega_{k,1}^2 + \omega_{k,2}^2
    \end{array}
    \hspace{-1mm}\right] \label{eq:Hk}
\end{align}
where we have dropped the superscript $1$ for clarity,
and $\omega_{k,i}$ is the $i$th element of $\bbomega(k)$.

Let $\mathbf P$ and $\mathbf H$ be the $3\times 3$ error covariance and information matrix, respectively. We have
\begin{align}
    \mathbf P = \mathbf H^{-1} = \biggl(\sum_{k=1}^N \mathbf H_k \biggr)^{-1}. \label{eq:covP}
\end{align}
Furthermore, suppose the rotation velocities are zero mean and uncorrelated across axes. Then, from~\eqref{eq:covP} and~\eqref{eq:Hk}, using Jensen's inequality \cite{boyd2004convex}, and assuming that $\bbomega(k)$'s are zero-mean and independent among the $3$ axes, we have
\begin{align}
    \mathbb E\{\mathbf P\}
    &\succeq
    \biggl(\sum_{k=1}^N \mathbb E\{\mathbf H_k\} \biggr)^{-1} \nonumber
    =
    \sigma^2\diag\{E_{23}^{-1}, E_{13}^{-1}, E_{12}^{-1}\} \nonumber
\end{align}
where $\mathbf A \succeq \mathbf B$ means $\mathbf A - \mathbf B$ is positive semi-definite, and
$E_{ij} := \sum_{k=1}^N \mathbb E\{ \omega_{k,i}^2 + {\omega_{k,j}^2} \}$.
Therefore:
%
\begin{align}
    \text{Tr}\bigl\{\mathbb E\{\mathbf P\} \bigr\}
    &\ge \sigma^2 (E_{23}^{-1} + E_{13}^{-1} + E_{12}^{-1})
    \\
    &\ge 9\sigma^2 (E_{23} + E_{13} + E_{12})^{-1} \label{eq:lower_bound_2}
    \\
    &= 4.5 \bigl( \mathbb E\bigl\{ \text{SNR}^2 \} \bigr)^{-1} \label{eq:lower_bound_P_final}
\end{align}
where (\ref{eq:lower_bound_P_final}) uses (\ref{eq:snr}), and (\ref{eq:lower_bound_2}) uses Jensen's inequality and that $f(x) = 1/x$ is a convex function.

\section{Iterative Solver} \label{appendix:iterative_solver}
The intrinsic error is modeled as
\begin{align} 
	\mathbf S_i &= \hat{\mathbf S}_i + \tilde{\mathbf S}_i 
	= \hat{\mathbf S}_i ( \mathbf I + \tilde{\mathbf S}_i')
\end{align}
in which $\hat{\mathbf S}_i$ is the current estimate of $\mathbf S_i$, $\tilde{\mathbf S}_i$ is the estimation error,
and $\tilde{\mathbf S}_i' = \hat{\mathbf S}_i^{-1} \tilde{\mathbf S}_i$ is the percentage formulation of the estimation error estimated by the iterative solver.

Multiplying both sides of (\ref{eq:omega_meas}) with $\hat{\mathbf S}_i^{-1}$, we have
\begin{align}
\hspace{-2mm}
	{}^i{\boldsymbol\omega}_\text{m}'(k) 
	&\coloneqq 
	\hat{\mathbf S}_i^{-1}  {}^i \boldsymbol\omega_\text{m}(k) 
	= 
	(\mathbf I + \tilde{\mathbf S}_i') {}^i \boldsymbol\omega(k) + \mathbf b_i' + \mathbf n_i'(k)
\end{align}
where $\mathbf b_i' \coloneqq \hat{\mathbf S}_i^{-1} \mathbf b_i$ and $\mathbf n_i'(k) \coloneqq \hat{\mathbf S}_i^{-1} \mathbf n_i(k)$. Furthermore, the true rotation is approximated as $\mathbf C \simeq (\mathbf I - \skewm{\tilde{\boldsymbol\theta}}) \hat{\mathbf C}$.
Then the residual can be defined as
\begin{align}
	\mathbf r(k) 
	&= 
	{}^1{\boldsymbol\omega}_\text{m}'(k) 
	- \hat{\mathbf C}\, {}^2{\boldsymbol\omega}_\text{m}'(k) \label{residual}
 \\
	&\simeq
	(\mathbf I + \tilde{\mathbf S}_1') {}^1 \boldsymbol\omega(k) + \mathbf b_1' + \mathbf n_1'(k) \nonumber
        \\
	&- (\mathbf I + \skewm{\tilde{\boldsymbol\theta}} ) \mathbf C 
	\bigl( (\mathbf I + \tilde{\mathbf S}_2') {}^2 \boldsymbol\omega(k) + \mathbf b_2' + \mathbf n_2'(k) \bigr) \nonumber \\
	&=
	\skewm{ {}^1 \boldsymbol\omega(k) }
	\tilde{\boldsymbol\theta} 
	+ \boldsymbol\Gamma\, {}^1 \boldsymbol\omega(k)
	+ \mathbf b_{12} +  \mathbf n_{12}(k) \nonumber 
\end{align}
where 
$\mathbf b_{12} \coloneqq \mathbf b_1' - (\mathbf I + \skewm{\tilde{\boldsymbol\theta}})  \hat{\mathbf C} \,\mathbf b_2'$,
$\mathbf n_{12}(k) \coloneqq \mathbf n_1'(k) -  \hat{\mathbf C}\, \mathbf n_2'(k)$, 
and
\begin{align}
	\boldsymbol\Gamma 
	&\coloneqq 
	\tilde{\mathbf S}_1' -  {\mathbf C}\, \tilde{\mathbf S}_2'\, {\mathbf C}^\top  \label{Gamma}
 \end{align}
which is symmetric since $\tilde{\mathbf S}_1'$ and $\tilde{\mathbf S}_2'$ are diagonal. Let
\begin{align}
    \boldsymbol\gamma 
	&\coloneqq 
	\left[\begin{array}{cccccc}
		\gamma_{11} & \gamma_{12} & \gamma_{13} & \gamma_{22} & \gamma_{23} & \gamma_{33}
	\end{array}\right]^\top 
\end{align}
with $\gamma_{ij}$ being the $(i,j)$th element of $\boldsymbol\Gamma$,
We have
\begin{align}
    \mathbf r(k) 
    &=
	\skewm{ {}^1 \boldsymbol\omega(k)  }
	\tilde{\boldsymbol\theta} 
	+ \boldsymbol\Omega(k)\boldsymbol\gamma + \mathbf b_{12} + \mathbf n_{12}(k) \nonumber \\
	&=
	\mathbf J(k) \mathbf x + \mathbf b_{12} + \mathbf n_{12}(k)  \label{residual_linear}
\end{align}
where
\begin{align}
    {\boldsymbol\Omega}(k)
	&=
	\left[\begin{array}{cccccc}
	\omega_{1,k} & \omega_{2,k} & \omega_{3,k} & 0 & 0 & 0 \\
	0 & \omega_{1,k} & 0 &  \omega_{2,k} & \omega_{3,k} & 0 \\
	0 & 0 & \omega_{1,k} & 0 & \omega_{2,k} & \omega_{3,k}
	\end{array}\right] 
	\\
	\mathbf J(k) 
	&= 
	\left[\begin{array}{cc}
	 \skewm{ {}^1 \boldsymbol\omega(k)  } & \boldsymbol\Omega(k)
	\end{array}\right] \label{eq:Jk}
 \\
 \mathbf x &= \left[\begin{array}{cc}
	\tilde{\boldsymbol\theta}^\top & \boldsymbol\gamma^\top
	\end{array}\right]^\top
\end{align}
in which $\omega_{1,k}$, $\omega_{2,k}$, and $\omega_{3,k}$ are the $3$ elements of ${}^1\boldsymbol\omega(k)$.
Furthermore, since $\hat{\mathbf S}_i \simeq \mathbf I$ and $\hat{\mathbf C}$ is an orthonormal matrix, we have
$\text{Cov}\{\mathbf n_{12}\} \simeq \sigma^2 \mathbf I$ with $\sigma^2 = \sigma_1^2 + \sigma_2^2$.

Using (\ref{residual_linear}), a least-squares problem can be formulated with objective function given by
\begin{align}
	f(\mathbf x,\,\mathbf b_{12}) = \frac{1}{2 \sigma^2}\sum_{k=1}^N \| \mathbf r(k) - \mathbf J(k)\mathbf x - \mathbf b_{12} \|^2\,. \label{obj_fun}
\end{align}
To find the optimal solution, first solve for $\mathbf b_{12}$ as
\begin{align}
	\frac{\partial f}{\partial \mathbf b_{12}} = \mathbf 0 
	\Rightarrow
	\mathbf b_{12}^*(\mathbf x)
	= \frac{1}{N} \sum_{k=1}^N \bigl(\mathbf r(k) - \mathbf J(k) \mathbf x\bigr)
	= \bar{\mathbf r} - \bar{\mathbf J}\mathbf x\,. \nonumber
\end{align}
Substituting $\mathbf b_{12} = \mathbf b_{12}^*(\mathbf x)$ into (\ref{obj_fun}), 
we obtain an objective function about $\mathbf x$ as
\begin{align}
	f'(\mathbf x) = \frac{1}{2\sigma^2}\sum_{k=1}^N \| \breve{\mathbf r}(k) - \breve{\mathbf J}(k)\mathbf x \|^2 \label{eq:obj_iter_cost_2}
\end{align}
where
\begin{align}
	\breve{\mathbf r}(k) 
	&=
	{\mathbf r}(k) - \bar{\mathbf r} 
	=
	{}^1\breve{\boldsymbol\omega}_\text{m}'(k) 
	- \hat{\mathbf C}\, {}^2\breve{\boldsymbol\omega}_\text{m}'(k) \label{equ:residual}
	\\
	\breve{\mathbf J}(k)
        &=
	{\mathbf J}(k) - \bar{\mathbf J} 
	= 
	\left[\begin{array}{cc}
	 \skewm{ {}^1 \breve{\boldsymbol\omega}(k) } & \breve{\boldsymbol\Omega}(k)
	\end{array}\right] \label{Jk_iterative}
\end{align}
in which (\ref{equ:residual}) uses (\ref{residual}), 
\begin{align}
	\breve{\boldsymbol\Omega}(k)
	&=
	\left[\begin{array}{cccccc}
	\breve\omega_{1,k} & \breve\omega_{2,k} & \breve\omega_{3,k} & 0 & 0 & 0 \\
	0 & \breve\omega_{1,k} & 0 &  \breve\omega_{2,k} & \breve\omega_{3,k} & 0 \\
	0 & 0 & \breve\omega_{1,k} & 0 & \breve\omega_{2,k} & \breve\omega_{3,k}
	\end{array}\right] 	\nonumber
\end{align}
where $\breve\omega_{1,k}$, $\breve\omega_{2,k}$, and $\breve\omega_{3,k}$ are the $3$ elements of ${}^1\breve{\boldsymbol\omega}(k)$.
Note that 
the Jacobian $\breve{\mathbf J}(k)$ in (\ref{Jk_iterative}) can be computed using the measurements.

Then, the iterative solver using, e.g., Gauss-Newton method, solves a least squares problem for $\mathbf x$ in each iteration with objective function given by (\ref{eq:obj_iter_cost_2}). 
After that, the diagonal elements of $\tilde{\mathbf S}_1'$ and $\tilde{\mathbf S}_2'$, ${}^1\tilde{\mathbf s}'$ and ${}^2\tilde{\mathbf s}'$, can be recovered from (\ref{Gamma}) for $\mathbf C = \hat{\mathbf C}$. 
Since $\boldsymbol\Gamma$ is symmetric, (\ref{Gamma}) provides $6$ linear equations about $\tilde{\mathbf s}_1'$ and $\tilde{\mathbf s}_2'$
\begin{align}
    \left[\begin{array}{c}
    \gamma_{11} \\ \gamma_{12} \\ \gamma_{13} \\ \gamma_{22} \\ \gamma_{23} \\ \gamma_{33}
    \end{array}\right]
    =
    \underbrace{\left[\begin{array}{cccc}
    1 & 0 & 0 & -\mathbf c_1 \odot \mathbf c_1 \\
    0 & 0 & 0 & -\mathbf c_1 \odot \mathbf c_2 \\
    0 & 0 & 0 & -\mathbf c_1 \odot \mathbf c_3 \\
    0 & 1 & 0 & -\mathbf c_2 \odot \mathbf c_2 \\
    0 & 0 & 0 & -\mathbf c_2 \odot \mathbf c_3 \\
    0 & 0 & 1 & -\mathbf c_3 \odot \mathbf c_3
    \end{array}\right]}_{\boldsymbol\Phi}
    \left[\begin{array}{c}
    {}^1\tilde{s}_x \\ {}^1\tilde{s}_y \\ {}^1\tilde{s}_z \\ {}^2\tilde{s}_x \\ {}^2\tilde{s}_y \\ {}^2\tilde{s}_z
    \end{array}\right]
\end{align}
where $\mathbf c_i$ is the $i$th row of $\mathbf C$, and $\odot$ means element-wise multiplication. 
Since $\mathbf C\in \mathrm{SO(3)}$, the sum of the $3$ elements of $\mathbf c_i \odot \mathbf c_j$ is $1$ for $i=j$, and $0$ otherwise.
As a result, the column vectors of $\boldsymbol\Phi$ are linearly dependent, and 
at most $5$ d.o.f. of the scale factors can be recovered, which agrees with Theorem \ref{thm:observability}. 
Thus, we fix the $1$st element of ${}^1\tilde{\mathbf s}'$ as zero and solve for the remaining $5$ scale factor errors. 
For singularity cases in Lemma \ref{lem2} and \ref{lem3}, more elements in ${}^1\tilde{\mathbf s}'$ and ${}^2\tilde{\mathbf s}'$ need to be fixed as zero.
Typically the algorithm converges in fewer than $5$ iterations.

\section{Impact of Skewness Error} \label{appendix:impact_of_skewness}

To assess the impact of intrinsic errors on the accuracy of rotation estimation, we rewrite \eqref{eq:defineA} as
$\mathbf{C} = \mathbf{S}_1^{-1} \mathbf{A}   \mathbf{S}_2$,
%
which holds regardless of the structure of $\mathbf{S}_1$ and $\mathbf{S}_2$.
Similarly, the estimated value of $\mathbf C$ is given by
\begin{eqnarray} \label{eq:C_hat}
\mathbf{\hat{C}} &= \mathbf{\hat{S}}_1^{-1} \mathbf{\hat{A}}   \mathbf{\hat{S}}_2\,.
\end{eqnarray}

Define the orientation error from the following relations
\begin{eqnarray} \label{eq:small_angle_approx_C_repeat}
\rotmh \simeq \bigl( \eye3 + \skewm{ \tildebtheta}\bigr) \rotm 
~\Leftrightarrow~
%
\rotm \simeq \bigl( \eye3 - \skewm{ \tildebtheta }\bigr) \rotmh
\end{eqnarray}
from which we have
\begin{eqnarray} \label{eq:skew_symmetric_error}
\skewm{ \tildebtheta} \simeq \frac{1}{2} \bigl ( \rotmh \cdot \rotm^\top  - \rotm  \cdot \rotmh^\top \bigr)\,.
\end{eqnarray}
Next, for a general variable $\mathbf X$, define the relation between the estimate, $\mathbf{\hat{X}}$, the error, $\mathbf{\widetilde{X}}$, and the true value of $\mathbf{X}$
\begin{eqnarray}
\mathbf{\hat{X}}   &:= \mathbf{X}  - \mathbf{\widetilde{X}}\,. \label{eq:S1}
\end{eqnarray}
%
From \eqref{eq:S1} for $\mathbf X = \mathbf S_1$, we have:
\begin{eqnarray} \label{eq:invS1}
 \mathbf{\hat{S}}_1 ^{-1} = \bigl( \mathbf{S}_1 -  \mathbf{\widetilde{S}}_1  \bigr)^{-1}
 \simeq \mathbf{S}_1^{-1} + \mathbf{S}_1^{-1}  \mathbf{\widetilde{S}}_1  \mathbf{S}_1^{-1}\,.
\end{eqnarray}

Substituting \eqref{eq:invS1} and \eqref{eq:S1} for $\mathbf X = \mathbf A$ and $\mathbf S_2$ in \eqref{eq:C_hat} and employing the small error approximation yields:
%
\begin{align} 
\mathbf{\hat{C}} &\simeq
\bigl( \mathbf{S}_1^{-1} + \mathbf{S}_1^{-1}  \mathbf{\widetilde{S}}_1  \mathbf{S}_1^{-1} \bigr)
\bigl(  \mathbf{A}  - \mathbf{\widetilde{A}} \bigr)
\bigl(  \mathbf{S}_2 -  \mathbf{\widetilde{S}}_2 \bigr) \nonumber \\ 
&\simeq
\mathbf{S}_1^{-1} \mathbf{A}   \mathbf{S}_2  
-  \mathbf{S}_1^{-1} \mathbf{A}    \mathbf{\widetilde{S}}_2 
- \mathbf{S}_1^{-1}  \mathbf{\widetilde{A}}   \mathbf{S}_2   
+ \mathbf{S}_1^{-1}  \mathbf{\widetilde{S}}_1 \mathbf{S}_1^{-1} \mathbf{A}   \mathbf{S}_2 \nonumber  \\
&=
\mathbf{C}  - 
\mathbf{C} \mathbf{S}_2 ^{-1}  \mathbf{\widetilde{S}}_2 
-  \mathbf{S}_1^{-1} \mathbf{\widetilde{A}}   \mathbf{S}_2  
+ \mathbf{S}_1^{-1} \mathbf{\widetilde{S}}_1 \mathbf{C} \nonumber
\end{align}
and thus
\begin{align} \label{eq:Cerror}
\hspace{-2mm}
\rotmh \cdot \rotm^\top  
&\simeq
\eye{3} +
 \mathbf{S}_1^{-1} \mathbf{\widetilde{S}}_1  
 - \mathbf{C} \mathbf{S}_2 ^{-1}  \mathbf{\widetilde{S}}_2  \mathbf{C}^\top
-  \mathbf{S}_1^{-1} \mathbf{\widetilde{A}}   \mathbf{S}_2    \mathbf{C}^\top
\end{align}
Substituting (\ref{eq:Cerror}) in~\eqref{eq:skew_symmetric_error} results in:
\begin{align} 
\skewm{ \tildebtheta} &\simeq 
%
%
%
%
\frac{1}{2} \left( 
\mathbf{W}_1 - \mathbf{C}  \mathbf{W}_2   \mathbf{C}^\top + \mathbf{N} 
\right) \label{eq:skew_symmetric_error_2}
\end{align}
where we have defined
\begin{eqnarray} \label{eq:skew_matrices}
\mathbf{W}_i &:=& \mathbf{S}_i^{-1} \mathbf{\widetilde{S}}_i - \bigl(  \mathbf{S}_i^{-1} \mathbf{\widetilde{S}}_i  \bigr)^\top  ~~~~~,\, i=1,2 \\
\mathbf{N} &:=&   \bigl( \mathbf{S}_1^{-1} \mathbf{\widetilde{A}}   \mathbf{S}_2    \mathbf{C}^\top \bigr)^\top  - \mathbf{S}_1^{-1} \mathbf{\widetilde{A}}   \mathbf{S}_2    \mathbf{C}^\top\,.
\label{eq:skew_noise}
\end{eqnarray}

Note that $\mathbf{W}_1$, $\mathbf{W}_2$ (and hence $\mathbf{C}  \mathbf{W}_2   \mathbf{C}^\top$), as well as $\mathbf{N}$ are all skew symmetric and so is the left hand-side of~\eqref{eq:skew_symmetric_error_2}.
With $\mathbf{w}_1$, $\mathbf{w}_2$, and $\mathbf{n}$ being the vectors generating $\mathbf{W}_1$, $\mathbf{W}_2$, and $\mathbf N$, respectively, and using 
$\mathbf C \skewm{\mathbf w_2} \mathbf C^\top = \skewm{ \mathbf C \mathbf w_2}$, we obtain (\ref{rot_err_analytical}).
Lastly, since $\mathbf{S}_i$ is close to identity matrix, and consider $\mathbf{\widetilde{S}}_i$ as upper triangular matrix with off-diagonal elements corresponding to skewness errors, we have (\ref{skewness_error}).
%
%

{
\vspace{0.05cm}
\bibliographystyle{packages/IEEEtran}
\bibliography{libraries/extra,libraries/related,libraries/rpng}
}

\end{document}